\newtheorem{theorem}{Theorem}[section]
\newtheorem{lemma}[theorem]{Lemma}
\newtheorem{corollary}[theorem]{Corollary}
\newtheorem*{keywords}{\bf Keywords}
\newcommand{\RR}{\mathbb R}
\newcommand{\NN}{\mathbb N}
\newcommand{\EE}{\mathbb E}
\begin{document}

\title{Free energy of Bayesian Convolutional Neural Network with Skip Connection}

\author{Shuya Nagayasu and Sumio Watanabe\\
Department of Mathematical and Computing Science\\
Tokyo Institute of Technology,\\
Mail-Box W8-42, 2-12-1, Oookayama, \\
Meguro-ku, Tokyo,
152-8552, Japan
}

\date{}

\maketitle

\begin{abstract}
Since the success of Residual Network(ResNet), many of architectures of Convolutional Neural Networks(CNNs) have adopted skip connection. While the generalization performance of CNN with skip connection has been explained within the framework of Ensemble Learning, the dependency on the number of parameters have not been revealed. In this paper, we show that Bayesian free energy of Convolutional Neural Network both with and without skip connection in Bayesian learning. The upper bound of free energy of Bayesian CNN with skip connection does not depend on the oveparametrization and, the generalization error of Bayesian CNN has similar property.
\end{abstract}
\begin{keywords}
Learning theory; Convolutional Neural Network; Bayesian Learning; Free Energy 
\end{keywords}

\section{Introduction}
Convolutional Neural Networks (CNNs) are a type of Neural Networks mainly used for computer vision. CNNs have been shown the high performance with deep layers \  \cite{Szegedy2015, Krizhevsky2017}. Residual Network(ResNet) \ \cite{He2016} adopted the skip connection for addressing the problem that the loss function of CNN with deep layers does not decrease well through optimization. After success of ResNet, the CNNs with more than 100 layers are realized. The high performance of ResNet has been explained by similarity to the ensemble learning \ \cite{Huang2018, Nitanda2020, Ganaie2022}. On the other hand, there is a common issue in neural networks that the reason why the overparametrized deep neural network generalized has been unknown yet. 

In conventional learning theory, if the Fisher information matrix of a learning machine is positive definite, and the data size is sufficient large, the generalization error of the learning machine is determined from the number of its parameter in maximum likelihood estimator \ \cite{Akaike1974}. The similar property is shown in free energy and generalization error in Bayesian learning \ \cite{Schwarz1978,Rissanen1978, Akaike1980}. From these characteristics of generalization error and free energy some information criteria such as AIC, BIC, MDL are proposed. However, most of the hierarchical models such as neural networks have degenerated Fisher information matrix. In such models, the Bayesian generalization error and free energy are determined by a rational number called Real Log Canonical Threshold(RLCT) and that is smaller than the number of parameters \ \cite{Watanabe2001b, Watanabe2007}. In particular, RLCTs are revealed in some concrete models such as three layered neural networks \ \cite{Watanabe2001a, Aoyagi2012}, normal mixtures \ \cite{Hartigan1985, Yamazaki2003}, Poisson mixtures \ \cite{Sato2019}, Boltzmann machine \ \cite{Yamazaki2005, Aoyagi2010}, reduced rank regression \ \cite{Aoyagi2005}, Latent Dirichlet allocation \ \cite{Hayashi2021}, matrix factorization, and Bayesian Network \ \cite{Yamazaki2012}. While RLCTs of many hierarchical models are revealed, that of neural networks with multiple layer of nonlinear transformation has not been clarified. Yet the possibility of that is shown in \ \cite{Wei2022}, the RLCT of Deep Neural Network is revealed \ \cite{Nagayasu2023a}. On the other hand the RLCT of neural networks other than DNN was not explored.

In Bayesian learning for neural networks, how to realize the posterior is important. There exist approaches for generating posterior, Variational Approximation or Markov chain Monte Carlo(MCMC) methods. Variational Approximation for neural netowrks, Variational Autoencoder \ \cite{Kingma2013} or Monte Carlo dropout \ \cite{Gal2016} are practically used. Also for CNNs, variational approach for Bayesian inference was proposed \ \cite{Gal2015}. MCMC for neural networks, Hamiltonian Monte Carlo or Langevin Dynamics are useful for sampling from posterior. Stochastic Gradient Langevin Dynamics(SGLD) \ \cite{Welling2011} is a MCMC method applying Stochastic Gradient Descent instead of Gradient Descent to Langevin Dynamics is popular MCMC for Bayesian Neural Networks.  \cite{Zhang2019} \ used SGLD for generating posterior of CNNs.

In this paper we clarify the free energy and generalization error of Bayesian CNNs with and without skip connection. In both case the free energy and generalization error don't depend on the number of parameters in redundant filters. Then,  in case with skip connection, the redundant layers affect the free energy and generalization error whereas they don't affect in case without skip connection. This paper consists of seven main sections and one appendix. In section\ref{section:cnn}, we describe the setting of Convolutional Neural Network analyzed in this paper. In section\ref{section:Bayes}, we explain the basic terms of the Bayesian learning.  In section\ref{section:theorem}, we note the main theorem of this paper.  In section\ref{section:experiment}, we conducts the experiment of synthetic data.  In section\ref{section:discussion} and section\ref{section:conclusion}, we discuss about the theorem in this paper and conclusion.  In appendix\ref{apd:first}, we prove the main theorem of this paper.

\section{Convolutional Neural Network}\label{section:cnn}
 In this section we describe the function of Convolutional Neural Network. First, we explain CNN without skip connection. The kernel size is $3 \times 3$ with zero padding and 1-stride. The activation function is ReLU. The numbers of the layers of the CNN are $K_1 (\geq 3)$ for Convolutional Layers and $K_2 (\geq 3)$ for Fully Connected Layers. 

 Let $x \in \RR^{L_1 \times L_2 \times  H_1}$ be an input vector generated from $q(x)$ with bounded support and $y \in \NN$ be an output vector with $q(y|x)$ whose support is $\{1, \ldots ,H_{K_1 + K_2}\}$. We define $w^{(k)} \in \RR^{3 \times 3 \times H_{k - 1} \times H_{k}}$, $b^{(k)} \in \RR^{H_{k}}$ as weight and bias parameters in each Convolutional Layer $(2 \leq k \leq K_1)$. $f^{(k)} \in \RR^{L_1 \times L_2 \times H_k}$ is output of each layer for $1 \leq k \leq K_1$.  $\mathrm{Conv}(f,w)$ is the convolution operation with zero padding and 1-stride: 

\begin{align}
\mathrm{Conv}(f^{k - 1} , w^{k})_{l_1,l_2,h_{k}} = \sum_{h_{k-1}} \sum_{p=1,q=1}^{p=3,q=3} f_{l_1 + p -1, l_2 + p -1,h_{k - 1}} w_{p,q,h_{k- 1},h_{k}}.
\end{align}
We define $g(b^{k}): \RR^{H_k} \rightarrow \RR^{L_1 \times L_2 \times H_k}$ as

\begin{align}
g(b^{(k)})_{l_1,l_2} = b^{(k)} \;\;\; 
\end{align}
for $1 \leq l_1 \leq L_1, 1 \leq  l_2 \leq L_2$. By using $w^{(k)}$ , $g(b^{(k)})$, and $f^{(k-1)}$, $f^{(k)}$ is described by 

\begin{align}
f^{(k)}(w,b,x) = \sigma(\mathrm{Conv}(f^{(k-1)}(w,b,x),w^{(k)}) + g(b^{(k)})) 
\end{align}
where $w, b$ are the set of all weight and bias parameters. $\sigma()$ is a function that applies the ReLU to all the elements of the input tensor. 

The output of $k = K_1 + 1$  layer is result of Global Average Pooling on $k = K_1$ layer:
\begin{align}
f^{(K_1 + 1)}(w,b,x) = \frac{1}{L_1L_2} \sum_{l_1= 1}^{l_1= L_1} \sum_{l_2 = 1}^{l_2 = L_2} f^{(K_1)}(w,b,x)_{l_1,l_2}.
\end{align}

Let $w^{(k)} \in \RR^{H_{k}} \times \RR^{H_{k - 1}}$, $b^{(k)} \in \RR^{H_{k}}$ be weight and bias parameters in each Fully Connected Layer $(K_1 + 2 \leq k \leq K_1 + K_2)$. 
For $K_1 + 2 \leq k \leq K_1 + K_2 - 1$, $f^{(k)}$ is defined by
\begin{align}
f^{(k)}(w,b,x) = \sigma(w^{(k)}f^{(k - 1)}(w,b,x) + b^{(k)}),
\end{align}
and for $k = K_1 + K_2$,
\begin{align}
f^{(K_1 + K_2)}(w,b,x) = \mathrm{softmax}(w^{(k)}f^{(k - 1)}(w,b,x) + b^{(k)}),
\end{align}
where $\mathrm{softmax}()$ is a softmax function
\begin{align}
\mathrm{softmax}(z)_i = \frac{e^{z_i}}{\sum_{j = 1}^{J}e^{z_j}}.
\end{align}
The output of the model is represented stochastically 
\begin{align}
y \sim \mathrm{Categorical}(f^{(K_1 + K_2)}(w,b,x))
\end{align}
where $\mathrm{Categorical}()$ is a categorical distribution. 

Then we describe CNN with skip connection. The number of layers within the skip connection is $K_s$ and the number of skip connection is $M$.  The output of the layer with skipped connection is described by
\begin{align}
f^{(mK_s + 2)}(w,b,x) = \sigma(\mathrm{Conv}(&f^{(mK_s + 1)}(w,b,x),w^{(mK_s + 2)})  \nonumber \\  
                                            &+ B^{(mK_s + 2)} +  f^{((m - 1)K_s + 2)}(w,b,x)). 
\end{align}
In this case, CNN satisfies the following conditions
\begin{align}
K_1 &= MK_s + 2 \nonumber \\
H^{mK_s + 2} &= \mathrm{const} (1 \leq m \leq M) \label{eq:cond3}.
\end{align}
The other conditions are the same as the case without skip connection. 

\begin{figure}[htbp]
  \begin{minipage}{6cm}
  \begin{center}
    \includegraphics[keepaspectratio, scale=0.13]{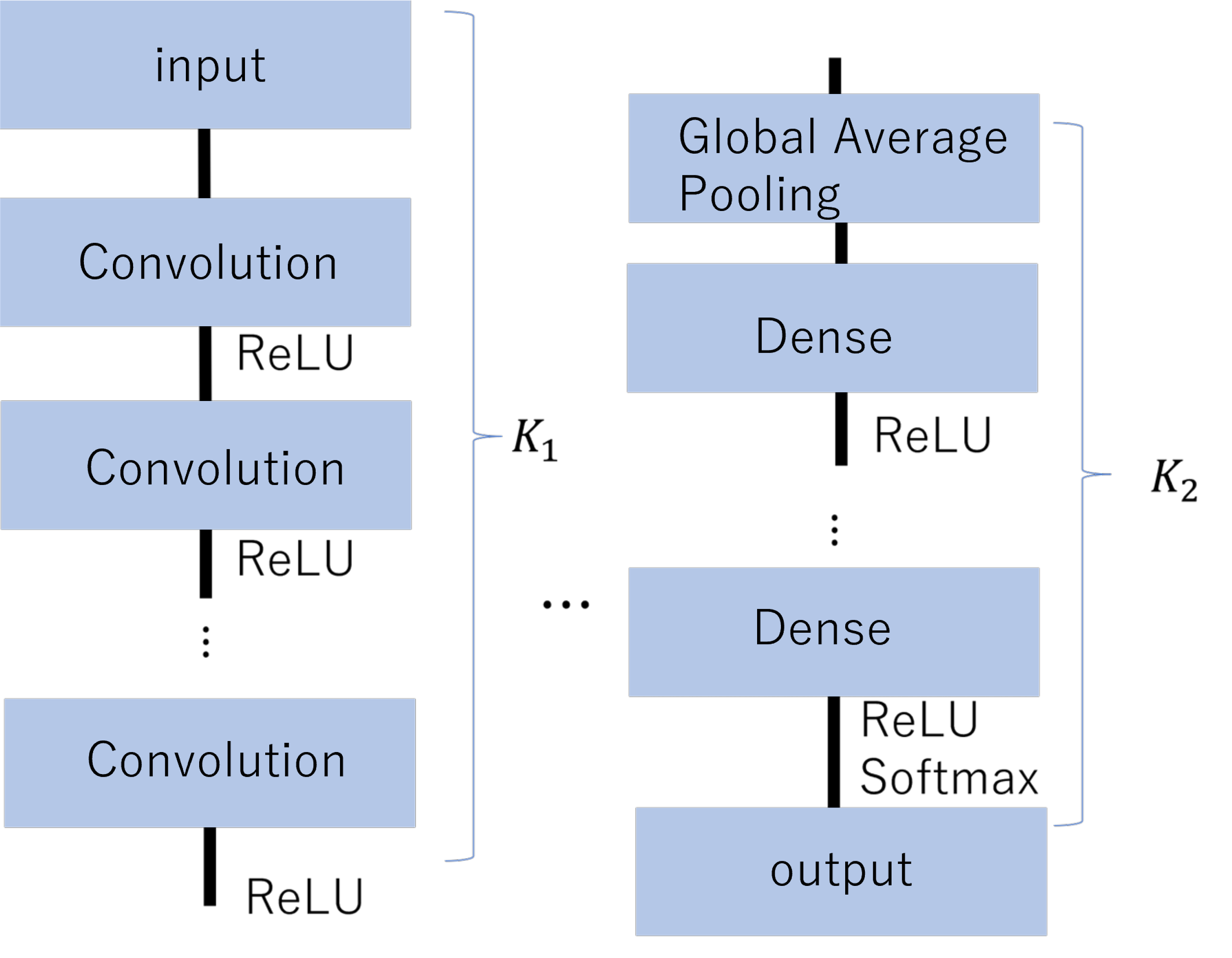}
    No Skip Connection
  \end{center}
  \end{minipage}
  \begin{minipage}{5cm}
  \begin{center}
    \includegraphics[keepaspectratio, scale=0.13]{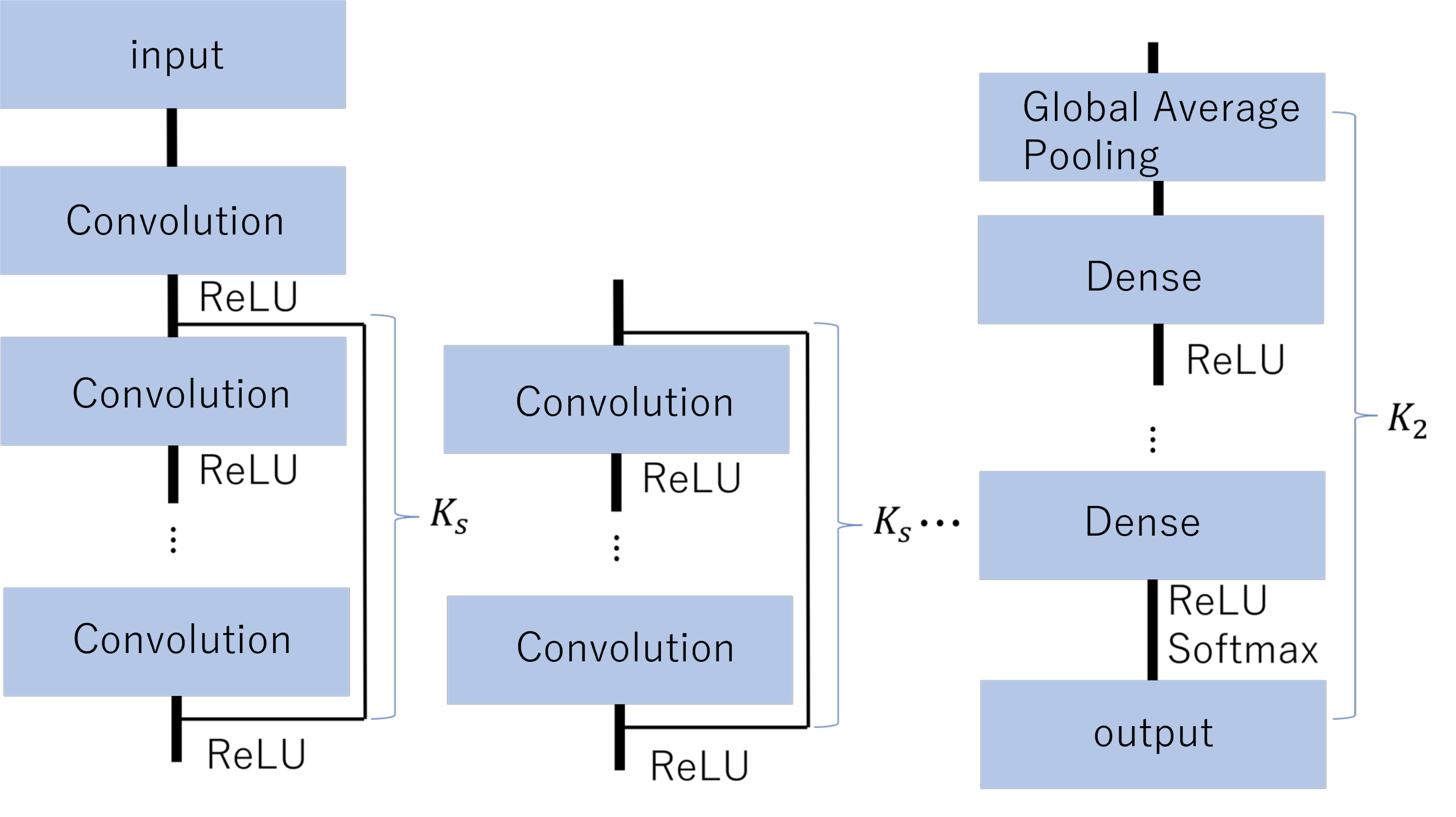}
    Skip Connection
    \end{center}
  \end{minipage}
  \caption{The structure of Convolutional Neural Network with and without Skip Connection}
  \label{fig_model}
\end{figure}
Figure\ref{fig_model} shows the configuration of Convolutional Neural Network analyzed in this paper. 

\section{Free energy in Bayesian Learning}\label{section:Bayes}
\subsection{Bayesian Learning}
Let $X^n = (X_1, \cdots X_n)$ and $Y^n = (Y_1, \cdots Y_n)$ be training data and labels. $n$ is the number of the data. These data and labels are generated from a true distribution $q(x,y) = q(y|x)q(x)$. The prior distribution $\varphi(w)$, the learning model $p(y|x,w)$ is given on the bounded parameter set $W$. Then the posterior distribution is defined by 
\begin{align}
p(w|X^n,Y^n) = \frac{1}{Z(Y^n|X^n)}\varphi(w)\prod_{i=1}^{n}p(Y_i|X_i, w)
\end{align}
where $Z_n = Z(Y^n|X^n)$ is normalizing constant denoted as marginal likelihood: 
\begin{align}
Z_n = \int \varphi(w)\prod_{i=1}^{n}p(Y_i|X_i,w)\mathrm{d}w.
\end{align}
The free energy is negative log value of marginal likelihood 
\begin{align}
F_n = - \log Z_n.
\end{align}
Free energy is equivalent to evidence and stochastic complexity. 

The posterior predictive distribution is defined as the average of the model by posterior: 
\begin{align}
p^{*}(y|x) = p(y|x , X^n,Y^n) = \int p(y|x,w) p(w|X^n,Y^n)\mathrm{d}w. 
\end{align}
Generalization error $G_n$ is given by Kullback-Leibler divergence between the true distribution and posterior distribution as follows  
\begin{align}
G_n = \int q(y|x)q(x) \log \frac{q(y|x)}{p^{*}(y|x) } \mathrm{d}x \mathrm{d}y.
\end{align} 
Average of Generalization error is difference between the average of Free energy of $n$ and $n + 1$:
\begin{align}
\EE[G_n] - S = \EE[F_{n+1}] - \EE[F_n], \label{eq:fngn}
\end{align} 
where $\EE[f(X^n,Y^n)]$ is the average of the generation of n data $\EE_{X^n,Y^n}[f(X^n,Y^n)]$. 

\subsection{Asymptotic property of Free energy and Generalization error}

It is well known that if the average Kullback-Leibler divergence 
\begin{align}
K(w) = \int q(y|x)q(x) \log \frac{q(y|x)}{p(y|x,w)} \mathrm{d}x \mathrm{d}y.
\end{align}
can be approximated by quadratic form, in other words, the Laplace approximation can be applied to the posterior distribution, average of Free energy has the following asymptotic expansion with the number of parameters of learning model$d$ \ \cite{Schwarz1978, Rissanen1978} 
\begin{align}
E[F_n] = n(S + \mathrm{Bias}) + \frac{d}{2}\log n + O(1)
\end{align}
where $S$ is entropy of true distribution and $\mathrm{Bias}$ is the minimum value of $K(w)$ for $w \in W$. The generalization error is calculated from Free energy by using equation(\ref{eq:fngn}) \ \cite{Akaike1980}:
\begin{align}
E[G_n] = \mathrm{Bias} + \frac{d}{2n} + o\left(\frac{1}{n}\right). 
\end{align}
Laplace approximation cannot be applied to the average Kullback-Leibler divergence of hierarchical model such as Gaussian Mixture or neural networks  because of the degeneration of Fisher information matrix. In such models, the average of Free energy and Generalization error have the following asymptotic expansions \ \cite{Watanabe2001b}
\begin{align}
E[F_n] &= n(S + \mathrm{Bias}) + \lambda \log n + o(\log n), \\
E[G_n] &= \mathrm{Bias} + \frac{\lambda}{n} + o\left(\frac{1}{n}\right),  \label{eq:gerror}
\end{align}
where $\lambda$ is a rational number called Real Log Canonical Threshold(RLCT). In particular, \cite{Nagayasu2023a} showed that in case $\mathrm{Bias} = 0$ and $x$ is bounded, when the Deep Neural Network is trained from the data generated from smaller network,  
\begin{align}
\lambda \leq \frac{d^*}{2}
\end{align}
where $d^* \leq d$ is the number of parameter of data generating Network. 
 

\section{Main Theorem}\label{section:theorem}

In this subsection the main result of this paper is introduced. First, to state the theorem, we define the data generating network. Both in skip connection the data generating network satisfies the following conditions about the number of layers and filters,
\begin{align}
K^*_1 \leq K_1, K^*_2 \leq K_2, (H^*)^{(1)} = H^{(1)} (H^*)^{(K_1)} = H^{(K_1 + K_2)} \label{eq:cond1}
\end{align}
and 
\begin{align}
H^{(k)} &\geq (H^*)^{(K^*_1)} (K^*_1 + 1 \leq k \leq K_1) \nonumber \\
H^{(k)} &\geq (H^*)^{(K_1 + K^*_2)} (K_1 + K^*_2 + 1 \leq k \leq K_1 + K_2 - 1) \nonumber \\
H^{(k)} &\geq (H^*)^{(k)} (others)  \label{eq:cond2}.
\end{align}
Then, we show the main theorem.
\begin{theorem} (No Skip connection)\label{theorem:111}
Assume that the learning machine and the data generating distribution 
are given by $p(y|x,w,b)$ and $q(y|x)=p(y|x,w^*,b^*)$ in case without skip connection which satisfy the conditions
(\ref{eq:cond1}) and (\ref{eq:cond2}), and that a training data 
$\{(X_i,Y_i)\;\;i=1,2,...,n\}$ is independently taken from $q(x)q(y|x)$. Then 
the average free energy satisfies the inequality, 
\begin{align}\label{eq:lambda1}
\EE[F_n]\leq nS+ \lambda_{CNN} \log n +C
\end{align}
where
\begin{align}\label{eq:lambda1}
\lambda_{CNN}=\frac{1}{2}
\left(|w^*|_0 + |b^*|_0 + \sum_{k = K^*_1 + 1}^{K_1}(9H_{K^*_1} + 1)H_{K^*_1}
\right)
\end{align}
where $|w^*|_0 , |b^*|_0$ are the numbers of parameters of weights and biases in data generating network. 
\end{theorem}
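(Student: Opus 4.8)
\section*{Proof proposal}

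The plan is to obtain the inequality through the Gibbs variational (ELBO) upper bound on the free energy, which yields the direction ``$\leq$'' directly and bypasses any resolution of singularities. Since the model is realizable, $\mathrm{Bias}=\min_{w,b}K(w,b)=0$, so only the fluctuation term must be controlled. Writing $L_n(w)=-\tfrac1n\sum_{i}\log p(Y_i|X_i,w,b)$ and applying the Donsker--Varadhan identity $-\log Z_n=\inf_{\psi}\{\,n\int\psi L_n\,dw+D_{KL}(\psi\|\varphi)\,\}$ to an arbitrary probability density $\psi$, together with $\EE_{X,Y}[\log p(Y|X,w,b)]=-S-K(w,b)$, gives after taking the sample expectation
\begin{align}
\EE[F_n]\leq nS+n\int \psi(w,b)\,K(w,b)\,dw\,db+D_{KL}(\psi\|\varphi),
\end{align}
valid for every $\psi$ (here I use only that the infimum is bounded by any particular $\psi$, so no interchange of $\EE$ and $\inf$ is needed). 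The whole proof then reduces to exhibiting one $\psi$, concentrated near a realizing parameter, for which the middle term is $O(1)$ and the last term is $\lambda_{CNN}\log n+O(1)$.

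Next I would fix one parameter point $(w_0,b_0)$ of the large network with $p(\cdot|w_0,b_0)=q$, built by copying $(w^*,b^*)$ into the first $K^*_1$ convolutional and first $K^*_2$ fully connected layers and realizing every surplus layer as a signal--preserving (pass--through) map. The crucial observation is that each surplus layer receives a \emph{nonnegative} input, being the image of a preceding ReLU, so the identity is realized by the convolution whose central tap equals $1$ on each of the $H_{K^*_1}$ signal--carrying channels with all other taps and all biases set to $0$; the subsequent ReLU then acts as the identity. To keep this map \emph{exactly} the identity, all $9H_{K^*_1}^2$ weights and $H_{K^*_1}$ biases of this $H_{K^*_1}\times H_{K^*_1}$ sub--filter must be held at their prescribed values, which is why each surplus convolutional layer contributes the $(9H_{K^*_1}+1)H_{K^*_1}$ non--free coordinates appearing in (\ref{eq:lambda1}). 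Conditions (\ref{eq:cond1})--(\ref{eq:cond2}) guarantee there are always at least $H_{K^*_1}$ channels to carry the signal; the surplus channels are switched off (zero incoming weight, nonpositive bias) so that the weights attached to them become genuinely flat --- these are exactly the ``redundant filter'' directions, and varying them over the whole prior leaves $p(\cdot|w,b)=q$ unchanged, so they drop out of the count. The remaining active coordinates are precisely the $2\lambda_{CNN}$ parameters of (\ref{eq:lambda1}): the $|w^*|_0+|b^*|_0$ nonzero entries of the data--generating network plus $(9H_{K^*_1}+1)H_{K^*_1}$ per surplus convolutional layer.

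Splitting the parameters into the $2\lambda_{CNN}$ active coordinates $u$ (true value $u_0$) and the flat coordinates $v$, I would then verify the two required estimates. For the divergence, since $x$ has bounded support and the network map is a finite composition of convolutions, ReLU ($1$--Lipschitz) and softmax, the logits and hence the class probabilities are locally Lipschitz in $(u,v)$ uniformly over the support; on a bounded parameter neighbourhood the softmax output is bounded below by a positive constant, so the categorical KL is comparable to the squared deviation of the probability vectors, giving $K(w,b)\leq c\|u-u_0\|^2$ with the flat directions contributing nothing. I would take $\psi$ equal to the prior restricted to the box $\{\|u-u_0\|_\infty\leq n^{-1/2}\}$ in the active coordinates and equal to the prior in the flat coordinates. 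Then $n\int\psi K\leq c\,(2\lambda_{CNN})=O(1)$, while $D_{KL}(\psi\|\varphi)=\tfrac12(2\lambda_{CNN})\log n+O(1)=\lambda_{CNN}\log n+O(1)$, because localizing each active coordinate to scale $n^{-1/2}$ costs $\tfrac12\log n$ and the flat coordinates cost nothing (the prior being continuous and positive near $(w_0,b_0)$). Substituting into the variational bound yields (\ref{eq:lambda1}).

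The main obstacle is the explicit embedding and the exact coordinate count, not the analysis. I expect the real work to be in establishing that every surplus layer can be realized as a signal--preserving map, that the ReLU genuinely acts as the identity there (which hinges on nonnegativity of every intermediate activation, propagated through the global average pooling and the convolutional--to--fully--connected transition), that the surplus channels permitted by (\ref{eq:cond2}) can be switched off so their weights are \emph{exactly} flat rather than merely approximately so, and that tracking these contributions layer by layer reproduces the stated constant $|w^*|_0+|b^*|_0+\sum_{k=K^*_1+1}^{K_1}(9H_{K^*_1}+1)H_{K^*_1}$. Distinguishing the genuinely flat redundant--filter directions from the pinned identity--convolution entries is the delicate bookkeeping; the quadratic KL bound and the entropy computation above are, by contrast, routine once the bounded--input, bounded--parameter, positive--softmax structure is in place.
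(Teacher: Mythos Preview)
Your approach is essentially the paper's: the Donsker--Varadhan bound with $\psi$ taken to be the normalized prior on a box is exactly Lemma~\ref{lemma:111} restricted to the essential set $W_E$, and your embedding (true weights on the $A$-block for $k\le K_1^*$, central-tap identity convolution on the $A$-block for $K_1^*<k\le K_1$, surplus $B$-channels killed through ReLU) is precisely the paper's definition of $W_E$ in eqs.\ \eqref{eq:def11}--\eqref{eq:casebycase2}. The Lipschitz route to $K(w,b)\le c\|u-u_0\|^2$ and the parameter count both match the paper's Lemma~\ref{lemma:555} and the volume computation that follows it.

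One point needs tightening. The coordinates you call ``genuinely flat'' are not flat over the whole prior: if $\psi$ equals the full prior on the incoming weights $w_{BA}^{(k)}$ and biases $b_B^{(k)}$ of the surplus channels, then sufficiently positive values of $b_B^{(k)}$ (or of $w_{BA}^{(k)}$, once it overcomes the negative bias on the bounded input) revive the $B$-channel, the identity $f_B^{(k)}=0$ fails, and $n\int\psi\,K$ is no longer $O(1)$. The paper avoids this by confining these coordinates to a fixed interval of \emph{negative} values (the $-{\cal M}_{BA}^{(k)},-{\cal M}_{BB}^{(k)},-{\cal M}_{B0}^{(k)}$ blocks), which has positive prior mass and therefore contributes only $O(1)$ to $D_{KL}(\psi\|\varphi)$; you should do the same rather than set the incoming weights to zero and leave them free. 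The only directions that are flat without any sign restriction are the \emph{outgoing} weights $w_{AB}^{(k)}$ from already-dead channels. With this correction your argument goes through and reproduces the paper's proof.
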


\begin{theorem} (Skip connection)\label{theorem:112}
Assume that the learning machine and the data generating distribution 
are given by $p(y|x,w,b)$ and $q(y|x)=p(y|x,w^*,b^*)$ in case with skip connection which satisfy the conditions (\ref{eq:cond3}), (\ref{eq:cond1}) and (\ref{eq:cond2}), and that a training data 
$\{(X_i,Y_i)\;\;i=1,2,...,n\}$ is independently taken from $q(x)q(y|x)$. Then 
\begin{align}\label{eq:lambda2}
\lambda_{CNN}=\frac{1}{2}(|w^*|_0 + |b^*|_0)
\end{align}
\end{theorem}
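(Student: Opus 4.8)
The plan is to reduce everything to an upper bound on the real log canonical threshold (RLCT) of the average Kullback--Leibler divergence $K(w,b)$ and then to construct a local change of variables that exhibits this bound. Since $q(y|x)=p(y|x,w^*,b^*)$ lies in the model, realizability gives $\mathrm{Bias}=\min_W K=0$, so by the same singular-learning-theory expansion used for Theorem \ref{theorem:111} one has $\EE[F_n]\le nS+\lambda_{CNN}\log n+C$ with $\lambda_{CNN}$ any upper bound on the RLCT of $K$ at a realizing parameter. I would use two standard facts: (i) if $K\le c\,\tilde K$ on a neighborhood of the true set then $\lambda_{K}\le\lambda_{\tilde K}$ (monotonicity of the RLCT), and (ii) the RLCT of a nondegenerate quadratic $\sum_{i=1}^{m}u_i^{2}$ in $m$ of the coordinates, with the remaining coordinates left free, equals $m/2$. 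The entire task then becomes: produce a local analytic coordinate system near a realizing parameter in which $K$ is dominated by a nondegenerate quadratic in exactly $|w^*|_0+|b^*|_0$ directions and is flat in all the others.

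First I would fix a realizing parameter $(w_0,b_0)$ by embedding the small data-generating network into the large learning network: match the core $K^*_1$ convolutional and $K^*_2$ fully connected layers to $(w^*,b^*)$; set the \emph{outgoing} weights of every surplus channel permitted by (\ref{eq:cond2}) to zero so those channels are inert; and --- this is where the skip connection enters --- set the internal convolution weights and biases of each of the $M-M^*$ redundant residual blocks to zero. Because every post-ReLU activation is nonnegative, a zeroed block computes $\sigma(0+\text{skip})=\text{skip}$, so the skip connection realizes the identity with zero internal parameters; the surplus depth does not change the function and $K(w_0,b_0)=0$.

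Next I would expand $K$ about $(w_0,b_0)$ and account for the three kinds of directions. The $|w^*|_0+|b^*|_0$ core parameters form a minimal realization, so they enter $K$ as a nondegenerate quadratic and contribute $\tfrac12(|w^*|_0+|b^*|_0)$. The surplus-channel weights enter $K$ only through products of an incoming weight with a near-zero outgoing weight --- exactly the three-layer degeneracy --- so they do not raise the leading RLCT; this part is common to Theorem \ref{theorem:111}. The crux, and the only place the skip connection matters, is the redundant-block weights. In the residual architecture a small internal kernel $t$ perturbs a block to $\sigma(\mathrm{Conv}(\phi,t)+\phi)=(\mathrm{Id}+\mathrm{Conv}(\cdot,t))\phi$, a \emph{perturbation of the identity}, so the perturbation produced by one redundant block is cancelled to leading order by choosing the adjacent block's kernel $\approx-t$, and the net effect of the whole stack depends only on the single combination $\sum_i t_i$, which is then absorbed into the following global-average-pooling and fully connected layer by a reparametrization. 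Hence the redundant-block directions are flat for $K$ and contribute nothing. This is precisely what fails without skip connections: there the identity must be realized by a kernel pinned at a nonzero value, so its $(9H_{K^*_1}+1)H_{K^*_1}$ perturbations enter $K$ nondegenerately and produce the extra sum in (\ref{eq:lambda1}).

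The main obstacle is making the redundant-block step rigorous inside the analytic framework on which the RLCT is defined. Since ReLU is not analytic at $0$, I would first restrict to the region of parameter and data space where the relevant pre-activations stay strictly positive --- using the bounded support of $x$ and a prior positive around $(w_0,b_0)$ --- so that the blocks may legitimately be treated as linear. The block-to-block cancellation is only first order, so the $O(t^2)$ terms and the zero-padding boundary residues left after absorption into the pooling and fully connected layer must be shown to be higher order, so that they create no new nondegenerate direction. Assembling the width degeneracy and the depth cancellation into a single change of variables with nonvanishing Jacobian (so the pulled-back prior stays positive) is the delicate technical core; once this is done, fact (ii) yields $\lambda_{CNN}\le\tfrac12(|w^*|_0+|b^*|_0)$ and hence (\ref{eq:lambda2}).
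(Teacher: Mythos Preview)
Your cancellation argument for the redundant residual blocks has a genuine gap. Around the zero-kernel realizing point you propose, a perturbation $t_i$ in block $i$ produces $(\mathrm{Id}+\mathrm{Conv}(\cdot,t_i))\phi$ only to first order, and composing two such blocks leaves a cross term $\mathrm{Conv}(\mathrm{Conv}(\cdot,t_1),t_2)$, which is effectively a $5\times5$ convolution and cannot be rewritten as a $3\times3$ kernel in an adjacent block; so the stack does \emph{not} depend only on $\sum_i t_i$. Even the first-order residue $\mathrm{Conv}(\cdot,\sum_i t_i)$ cannot be absorbed into the GAP $+$ FC head as you suggest: because of zero padding, $\mathrm{GAP}(\mathrm{Conv}(\phi,t))$ is not a linear function of $\mathrm{GAP}(\phi)$, so no reparametrization of the fully connected weights removes it. These leftover terms are quadratic in the redundant-block parameters but are not dominated by your core quadratic, so they spoil the ``flat direction'' count and you do not get the bound $m/2$ with $m=|w^*|_0+|b^*|_0$. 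The analyticity workaround you mention (restricting to a region where pre-activations are strictly positive) does not help here, since the obstruction is algebraic, not about ReLU kinks.

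The paper avoids all of this by a much more elementary volume argument and a different realizing region. Instead of setting the redundant-block weights and biases to zero, it sets every entry to a \emph{negative constant} ranging over a fixed box that does not shrink with $n$. Because each block's input is a ReLU output and hence nonnegative, the convolution with all-negative weights plus a negative bias is negative; ReLU annihilates it, so inside a redundant block the non-skip layers output exactly $0$ and the skip-receiving layer outputs exactly the skipped feature map. Thus on this region $W_E$ the network reproduces the true network up to the $O(1/\sqrt{n})$ perturbations in the core $|w^*|_0+|b^*|_0$ parameters, while the redundant-block parameters vary over a set of fixed positive volume and contribute nothing to the exponent. Plugging $\mathrm{Vol}(W_E)\ge C n^{-\lambda}$ with $\lambda=\tfrac12(|w^*|_0+|b^*|_0)$ into the inequality $\EE[F_n]\le nS-\log\int_{W_E}e^{-nK}\varphi$ gives the theorem directly, with no change of variables, no perturbation cancellation, and no analyticity hypothesis.
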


Proof of main theorems are shown in Appendix\ref{apd:first}.

If there exists asymptotic expansion of the generalization error $\EE[G_n]$ in theorem\ref{theorem:111} and theorem\ref{theorem:112}, that satisfies the following inequality  
\begin{align}
\EE[G_n] \leq \frac{\lambda_{CNN}}{n} + o\left(\frac{1}{n}\right),
\end{align}
where 
\begin{align}
G_n = \int q(x)\sum_{i = 1}^{H_{K_1 + K_2}} f_i^{(K^*_1 + K^*_2)}(w^*,b^*,x) \log \frac{f_i^{(K^*_1 + K^*_2)}(w^*,b^*,x)}{\EE_{w,b}[f_i^{(K_1 + K_2)}(w,b,x)]} \mathrm{d}x
\end{align}
which corresponds to categorical cross entropy.

\section{Experiment}\label{section:experiment}
In this section, we show the result of experiment of synthetic data. 

\subsection{Methods}
We prepared the 2-class labeled simple data shown in fig\ref{fig_synthetic}. The the data is $x \in R^{4 \times 4 \time 1}$ and the values of each elements are in $(-1,1)$. The average of each element is $0.5$ or $-0.5$ and added the truncated normal distribution noise within the interval$(-0.5,0.5)$. The probability of each label of data is 0.5. We trained CNN whose number of convolutional layer $K_1 = 2$ and fully connected layers$K_2 = 2$ with SGD. The number of filter is  $H_2 = 2$ and the parameters are $L_2$ regularized. We use the trained CNN named "true model" as a data generating distribution. Note that the label of original data fig\ref{fig_synthetic} is deterministic but the label of true model is probabilistic. We prepare three learning CNN models. Each number of convolutional layers is $K_1 = 2,3,4$. Each model has skip connection every one layers or does not have skip connection. The number of filters in each layers is $H^{(k)} = 4$. They have $K_2 = 2$ fully connected layers. The prior distribution is the Gaussian distribution which covariance matrix is $10^4 I$ for weight parameter and $10^2 I$ for bias parameter. 
We train the learning CNN models by using the Langevin dynamics. The learning rate is $10^{-2}$ and  the interval of sampling is 100. We use the average of $1000$ samples of learning CNN models as the average of posterior. We estimate the generalization error by the test error of $10000$ test data from true model. We trained each learning model $10$ times and estimated the $\EE[G_n]$ from the average of test error.  

\begin{figure}[htbp]
  \begin{minipage}{10cm}
  \begin{center}
    \includegraphics[keepaspectratio, scale=0.5]{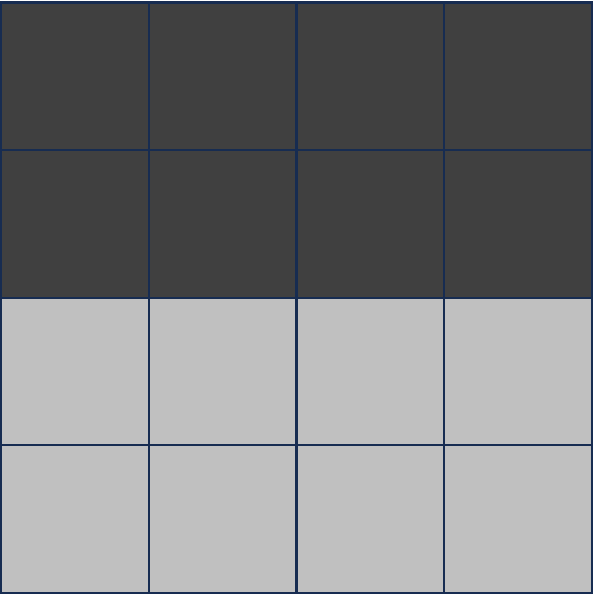}
  \end{center}
  \end{minipage}
  \begin{minipage}{5cm}
  \begin{center}
    \includegraphics[keepaspectratio, scale=0.5]{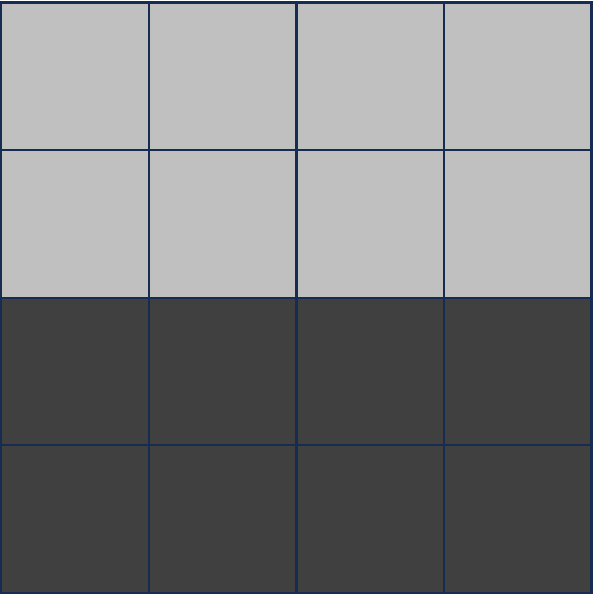}
    \end{center}
  \end{minipage}
  \caption{The average of input x of each label}
  \label{fig_synthetic}
\end{figure}

\subsection{Result of experiments}
\begin{table}[h]
 \caption{Experimental value and theoretical upper bound of the generalization error}
 \label{table:result}
 \centering
  \begin{tabular}{ccccc}
   \hline
   model &  $n \times$ Test Error &  $\lambda_{\mathrm{CNN}}$  &   $d_{\mathrm{model}} / 2$ \\
   \hline \hline
   $K_1 = 2$           & $ 16.0(1.9) $ & $   13  $ & $  25  $   \\
   $K_1 = 3$ no skip & $ 10.0(0.9) $ & $   32  $ & $  99  $   \\
   $K_1 = 4$ no skip & $ 58.4(2.3) $ & $   51  $ & $ 173  $   \\
   $K_1 = 3$ with skip& $ 11.4(2.3) $ & $  13 $ & $   99  $   \\
   $K_1 = 4$ with skip& $ 15.6(1.2) $ & $  13  $ & $ 173  $   \\
   \hline
  \end{tabular}
\end{table}
Table\ref{table:result} shows the result of the experiment. Test Error shows n times of the average of 10 test error in each model and the standard error of them.   $d_{\mathrm{model}}$ is a number of parameters of each model. All the CNN models include the true model, hence the bias is $0$. Then from equation(\ref{eq:gerror}), theoretical upper bound of the generalization error is $\lambda_{\mathrm{CNN}} / n$. In table\ref{table:result}, the experimental values of all models are smaller than $d_{\mathrm{model}} / 2$. 
Moreover in case with skip connection, the experimental value did not so increase with the increase of the number of layer. Then, in case $K_1 = 4$ without skip connection, the experimental value increased from the case$K_1 = 2$.  In case $K_1 = 3$ without skip connection, the experimental value is smaller than that of $K_1 = 2$. Behavior of MCMC is considered to be the cause of this result. Since MCMC in high dimensional model needs the long series for convergence in general, the result is deviated from theoretical predict.

\section{Discussion}\label{section:discussion}
\subsection{Difference with or without Skip Connection}
In this paper for analyzing the overparametrized CNN, the data generating network is smaller than learning network both case of Skip Connection. Nevertheless two cases of the data generating network is different, if the learning model network has double filter $H^{(k)}$ to the data generating network in each Convolutional Layer, the model network can represent the generating network in different case. The output of each layer is nonnegative hence the model can represent the skip connection or the negative of that. If the model network doesn't have larger layer to the data generating network, the free energy of CNN with skip connection can be both larger or smaller than that without skip connection by the data generating network. Then, the layer of model network gets larger, the free energy of CNN with skip connection does not change but that without skip connection gets larger and the free energy of CNN with skip connection comes to have smaller free energy for all data generating network.

\subsection{Comparison to Deep Neural Network}
Firstly we compare the result of this paper to that of DNN in \ \cite{Nagayasu2023a}. In case of DNN, the free energy depends on the layers of the model and only on that of the data generating network. This stands to the reason that mapping of the linear transformation in lower layer can be represented in higher layer. On the other hand, convolution operation doesn't have such property hence, the free energy of CNN without skip connection depends on the layer of learning model network. However, with skip connection, there exists the essential parameter which doesn't depend on overparametrized layers and the free energy does not also depend on the layer of learning model network.

\section{Conclusion}\label{section:conclusion}

In this paper, we studied Free energy of Bayesian Convolutinal Neural Network with Skip Connection 
and compared to the case without Skip Connection. Free energy of Bayesian CNN with Skip Connection doesn't depend on the layer of the model unlike the case without Skip Connection.  In Bayesian learning, the increase of Free energy is equivalent to generalization error, hence the generalization error has same property about the Skip Connection. In particular, Free energy of CNN without skip connection does not depends on the number of parameters in learning network but depends only on that in data generating network. This feature shows the generalization ability of CNN with skip connection does not decrease with respect to any overparameterization in Bayesian learning.      


\bibliography{ref}

\begin{thebibliography}{10}
\expandafter\ifx\csname url\endcsname\relax
  \def\url#1{\texttt{#1}}\fi
\expandafter\ifx\csname urlprefix\endcsname\relax\def\urlprefix{URL }\fi
\expandafter\ifx\csname href\endcsname\relax
  \def\href#1#2{#2} \def\path#1{#1}\fi

\bibitem{Szegedy2015}
C.~Szegedy, W.~Liu, Y.~Jia, P.~Sermanet, S.~Reed, D.~Anguelov, D.~Erhan,
  V.~Vanhoucke, A.~Rabinovich, Going deeper with convolutions, in: Proceedings
  of the IEEE conference on computer vision and pattern recognition, 2015, pp.
  1--9.

\bibitem{Krizhevsky2017}
A.~Krizhevsky, I.~Sutskever, G.~E. Hinton, Imagenet classification with deep
  convolutional neural networks, Communications of the ACM 60~(6) (2017)
  84--90.

\bibitem{He2016}
K.~He, X.~Zhang, S.~Ren, J.~Sun, Deep residual learning for image recognition,
  in: Proceedings of the IEEE conference on computer vision and pattern
  recognition, 2016, pp. 770--778.

\bibitem{Huang2018}
F.~Huang, J.~Ash, J.~Langford, R.~Schapire, Learning deep resnet blocks
  sequentially using boosting theory, in: International Conference on Machine
  Learning, PMLR, 2018, pp. 2058--2067.

\bibitem{Nitanda2020}
A.~Nitanda, T.~Suzuki, Functional gradient boosting for learning residual-like
  networks with statistical guarantees, in: International Conference on
  Artificial Intelligence and Statistics, PMLR, 2020, pp. 2981--2991.

\bibitem{Ganaie2022}
M.~A. Ganaie, M.~Hu, A.~Malik, M.~Tanveer, P.~Suganthan, Ensemble deep
  learning: A review, Engineering Applications of Artificial Intelligence 115
  (2022) 105151.

\bibitem{Akaike1974}
H.~Akaike, A new look at the statistical model identification, IEEE
  transactions on automatic control 19~(6) (1974) 716--723.
\newblock \href {https://doi.org/10.1109/tac.1974.1100705}
  {\path{doi:10.1109/tac.1974.1100705}}.

\bibitem{Schwarz1978}
G.~Schwarz, Estimating the dimension of a model, The annals of statistics 6~(2)
  (1978) 461--464.
\newblock \href {https://doi.org/10.1214/aos/1176344136}
  {\path{doi:10.1214/aos/1176344136}}.

\bibitem{Rissanen1978}
J.~Rissanen, Modeling by shortest data description, Automatica 14~(5) (1978)
  465--471.
\newblock \href {https://doi.org/10.1016/0005-1098(78)90005-5}
  {\path{doi:10.1016/0005-1098(78)90005-5}}.

\bibitem{Akaike1980}
H.~Akaike, Likelihood and the bayes procedure, in: Springer Series in
  Statistics, Springer New York, 1998, pp. 309--332.
\newblock \href {https://doi.org/10.1007/978-1-4612-1694-0_24}
  {\path{doi:10.1007/978-1-4612-1694-0_24}}.

\bibitem{Watanabe2001b}
S.~Watanabe, Algebraic geometrical methods for hierarchical learning machines,
  Neural Networks 14~(8) (2001) 1049--1060.
\newblock \href {https://doi.org/10.1016/s0893-6080(01)00069-7}
  {\path{doi:10.1016/s0893-6080(01)00069-7}}.

\bibitem{Watanabe2007}
S.~Watanabe, Almost all learning machines are singular, in: 2007 IEEE Symposium
  on Foundations of Computational Intelligence, IEEE, 2007, pp. 383--388.

\bibitem{Watanabe2001a}
S.~Watanabe, Algebraic analysis for nonidentifiable learning machines, Neural
  Computation 13~(4) (2001) 899--933.
\newblock \href {https://doi.org/10.1162/089976601300014402}
  {\path{doi:10.1162/089976601300014402}}.

\bibitem{Aoyagi2012}
M.~Aoyagi, K.~Nagata, Learning coefficient of generalization error in bayesian
  estimation and vandermonde matrix-type singularity, Neural Computation 24~(6)
  (2012) 1569--1610.
\newblock \href {https://doi.org/10.1162/neco_a_00271}
  {\path{doi:10.1162/neco_a_00271}}.

\bibitem{Hartigan1985}
J.~A. Hartigan, A failure of likelihood asymptotics for normal mixtures, in:
  Proceedings of the Barkeley Conference in Honor of Jerzy Neyman and Jack
  Kiefer, 1985, Vol.~2, 1985, pp. 807--810.

\bibitem{Yamazaki2003}
K.~Yamazaki, S.~Watanabe, Singularities in mixture models and upper bounds of
  stochastic complexity, Neural Networks 16~(7) (2003) 1029--1038.
\newblock \href {https://doi.org/10.1016/s0893-6080(03)00005-4}
  {\path{doi:10.1016/s0893-6080(03)00005-4}}.

\bibitem{Sato2019}
K.~Sato, S.~Watanabe, Bayesian generalization error of poisson mixture and
  simplex vandermonde matrix type singularity, arXiv preprint arXiv:1912.13289
  (2019).

\bibitem{Yamazaki2005}
K.~Yamazaki, S.~Watanabe, Singularities in complete bipartite graph-type
  boltzmann machines and upper bounds of stochastic complexities, {IEEE}
  Transactions on Neural Networks 16~(2) (2005) 312--324.
\newblock \href {https://doi.org/10.1109/tnn.2004.841792}
  {\path{doi:10.1109/tnn.2004.841792}}.

\bibitem{Aoyagi2010}
M.~Aoyagi, A bayesian learning coefficient of generalization error and
  vandermonde matrix-type singularities, Communications in Statistics - Theory
  and Methods 39~(15) (2010) 2667--2687.
\newblock \href {https://doi.org/10.1080/03610920903094899}
  {\path{doi:10.1080/03610920903094899}}.

\bibitem{Aoyagi2005}
M.~Aoyagi, S.~Watanabe, Stochastic complexities of reduced rank regression in
  bayesian estimation, Neural Networks 18~(7) (2005) 924--933.
\newblock \href {https://doi.org/10.1016/j.neunet.2005.03.014}
  {\path{doi:10.1016/j.neunet.2005.03.014}}.

\bibitem{Hayashi2021}
N.~Hayashi, The exact asymptotic form of bayesian generalization error in
  latent dirichlet allocation, Neural Networks 137 (2021) 127--137.
\newblock \href {https://doi.org/10.1016/j.neunet.2021.01.024}
  {\path{doi:10.1016/j.neunet.2021.01.024}}.

\bibitem{Yamazaki2012}
K.~Yamazaki, S.~Watanbe, Stochastic complexity of bayesian networks, arXiv
  preprint arXiv:1212.2511 (2012).

\bibitem{Wei2022}
S.~Wei, D.~Murfet, M.~Gong, H.~Li, J.~Gell-Redman, T.~Quella, Deep learning is
  singular, and that's good, {IEEE} Transactions on Neural Networks and
  Learning Systems (2022) 1--14\href
  {https://doi.org/10.1109/tnnls.2022.3167409}
  {\path{doi:10.1109/tnnls.2022.3167409}}.

\bibitem{Nagayasu2023a}
S.~Nagayasu, S.~Watanabe, Bayesian free energy of deep relu neural network in
  overparametrized cases, arXiv preprint arXiv:2303.15739 (2023).

\bibitem{Kingma2013}
D.~P. Kingma, M.~Welling, Auto-encoding variational bayes, CoRR abs/1312.6114
  (2013).

\bibitem{Gal2016}
Y.~Gal, Z.~Ghahramani, Dropout as a bayesian approximation: Representing model
  uncertainty in deep learning, in: international conference on machine
  learning, PMLR, 2016, pp. 1050--1059.

\bibitem{Gal2015}
Y.~Gal, Z.~Ghahramani, Bayesian convolutional neural networks with bernoulli
  approximate variational inference, arXiv preprint arXiv:1506.02158 (2015).

\bibitem{Welling2011}
M.~Welling, Y.~W. Teh, Bayesian learning via stochastic gradient langevin
  dynamics, in: Proceedings of the 28th international conference on machine
  learning (ICML-11), 2011, pp. 681--688.

\bibitem{Zhang2019}
Y.~Zhang, S.~Pal, M.~Coates, D.~Ustebay, Bayesian graph convolutional neural
  networks for semi-supervised classification, in: Proceedings of the AAAI
  conference on artificial intelligence, Vol.~33, 2019, pp. 5829--5836.

\end{thebibliography}

\appendix

\section{Proof of main theorem}\label{apd:first}

In this Appendix, we show the proof of main theorem. 
\subsection{Inequalities}
Note that we describe the Frobenius norm of any order of tensor as $\|\cdots\|$.
We denote the Kullback-Leibler divergence of a data-generating distribution
$q(y|x)=p(y|x,w^*,b^*)$ and 
a model $p(y|x)$ that
\begin{align}
K(w,b)=\int q(x)q(y|x)\log\frac{q(y|x)}{p(y|x,w,b)}\mathrm{d}x\mathrm{d}y. 
\end{align}

\begin{lemma} \label{lemma:111} \cite{Nagayasu2023a}
Assume that a set $W$ is contained in the set determined by 
the prior distribution $\{(w,b);\varphi(w,b)>0\}$. Then for an 
arbitrary postive integer $n$, 
\begin{align}
\EE[F_n]\leq nS -\log \int_{W} \exp(-n K(w,b))\varphi(w,b) \mathrm{d}w \mathrm{d}b.
\end{align}
\end{lemma}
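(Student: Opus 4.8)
The plan is to normalize the marginal likelihood by the true distribution, rewrite the expected free energy as an exact identity, and then exploit convexity of a log-partition functional together with Jensen's inequality to replace the empirical Kullback--Leibler quantity by its mean $K(w,b)$.

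First I would factor the true density out of $Z_n$. Writing $Z_n=\int\varphi(w,b)\prod_{i=1}^n p(Y_i|X_i,w,b)\,\mathrm{d}w\,\mathrm{d}b$ and inserting $q(Y_i|X_i)/q(Y_i|X_i)$ in each factor gives
\begin{align}
Z_n=\left(\prod_{i=1}^n q(Y_i|X_i)\right)\int\varphi(w,b)\exp\!\left(-nK_n(w,b)\right)\mathrm{d}w\,\mathrm{d}b,
\end{align}
where $K_n(w,b)=\frac1n\sum_{i=1}^n\log\frac{q(Y_i|X_i)}{p(Y_i|X_i,w,b)}$ is the empirical version of $K(w,b)$. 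Taking $F_n=-\log Z_n$ and averaging, the first factor contributes $\EE[-\sum_i\log q(Y_i|X_i)]=nS$, so that
\begin{align}
\EE[F_n]=nS-\EE\!\left[\log\int\varphi(w,b)\exp(-nK_n(w,b))\,\mathrm{d}w\,\mathrm{d}b\right].
\end{align}
Since the integrand is nonnegative and $W\subseteq\{\varphi>0\}$, restricting the domain to $W$ only decreases the integral and hence increases $-\log$, which yields
\begin{align}
\EE[F_n]\le nS-\EE\!\left[\log\int_{W}\varphi(w,b)\exp(-nK_n(w,b))\,\mathrm{d}w\,\mathrm{d}b\right].
\end{align}

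The heart of the argument is to move the data-average inside. I would regard the map $\Phi(h)=\log\int_W\varphi(w,b)\exp(-nh(w,b))\,\mathrm{d}w\,\mathrm{d}b$ as a functional of the function $h$ and observe that it is convex: for $h_t=(1-t)h_0+th_1$ this is exactly H\"older's inequality applied to $\varphi\,e^{-nh_t}=(\varphi e^{-nh_0})^{1-t}(\varphi e^{-nh_1})^{t}$. Because $K_n$ is a random element of function space whose pointwise mean is $\EE[K_n(w,b)]=K(w,b)$ (each summand has expectation $K(w,b)$ by the i.i.d.\ assumption), Jensen's inequality for the convex functional $\Phi$ gives
\begin{align}
\EE[\Phi(K_n)]\ge\Phi(\EE[K_n])=\Phi(K)=\log\int_W\varphi(w,b)\exp(-nK(w,b))\,\mathrm{d}w\,\mathrm{d}b.
\end{align}
Substituting this into the previous display and flipping the sign proves the claimed inequality.

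The main obstacle is getting the Jensen step in the right direction: naively pushing $\EE$ through the concave outer $\log$ would give the opposite, useless bound, so the decisive observation is that the \emph{whole} functional $h\mapsto\log\int_W\varphi e^{-nh}$ is convex in $h$, whence averaging its argument $K_n$ can only decrease it. The only points needing care beyond this are the elementary identity $\EE[K_n(w,b)]=K(w,b)$ and enough integrability that $\Phi$ is finite on $W$, both of which are immediate from the i.i.d.\ sampling and the boundedness of the parameter set and of $x$.
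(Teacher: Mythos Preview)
Your proof is correct. The paper itself does not prove this lemma; it is stated with a citation to \cite{Nagayasu2023a} and used as a black box, so there is no in-paper argument to compare against.

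Your approach---normalize by $q(Y_i|X_i)$, restrict the integration domain to $W$, and then exploit convexity of $h\mapsto\log\int_W\varphi\,e^{-nh}$ via Jensen---is the standard one and matches what appears in the cited literature. One cosmetic point: the Jensen step as you wrote it invokes the inequality for a convex functional on a function space, which is valid but can be made entirely elementary. Fix the probability measure $\mathrm{d}\nu=\varphi e^{-nK}\,\mathrm{d}w\,\mathrm{d}b/\int_W\varphi e^{-nK}$ and write
\begin{align}
\log\int_W\varphi\,e^{-nK_n}
=\log\int_W\varphi\,e^{-nK}+\log\int_W e^{-n(K_n-K)}\,\mathrm{d}\nu
\ge\log\int_W\varphi\,e^{-nK}-n\int_W(K_n-K)\,\mathrm{d}\nu,
\end{align}
using only ordinary Jensen for $\log$ against the probability measure $\nu$. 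Taking $\EE$ and using $\EE[K_n(w,b)]=K(w,b)$ kills the last term. This is exactly your convexity observation unwound, and it sidesteps any worry about Jensen in infinite dimensions.
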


\begin{lemma}\label{lemma:222} \cite{Nagayasu2023a}
For  arbitrary vectors $s,t$, 
\begin{align}
\|\sigma(s)-\sigma(t)\|\leq \|s-t\|.
\end{align}
\end{lemma}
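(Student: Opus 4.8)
The plan is to reduce the claim to the one-dimensional statement that the scalar ReLU map is $1$-Lipschitz, and then lift this to tensors of arbitrary order using the fact that $\sigma$ acts entrywise while the Frobenius norm is the Euclidean norm of the flattened tensor. Fixing any enumeration of the indices and writing $s=(s_i)$, $t=(t_i)$ for the components, we have $\sigma(s)_i=\sigma(s_i)$ and $\sigma(t)_i=\sigma(t_i)$, so that
\begin{align}
\|\sigma(s)-\sigma(t)\|^2=\sum_i\bigl(\sigma(s_i)-\sigma(t_i)\bigr)^2.
\end{align}
It therefore suffices to prove $|\sigma(a)-\sigma(b)|\leq|a-b|$ for all real $a,b$, after which summing the squared scalar inequalities over $i$ and taking square roots gives the result.

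First I would establish the scalar bound by a case analysis on the signs of $a$ and $b$, using $\sigma(x)=\max(x,0)$. If $a,b\geq 0$ then $\sigma(a)-\sigma(b)=a-b$ and equality holds; if $a,b\leq 0$ then $\sigma(a)=\sigma(b)=0$ and the left-hand side vanishes. In the mixed case $a\geq 0\geq b$ one has $|\sigma(a)-\sigma(b)|=a\leq a-b=|a-b|$ because $-b\geq 0$, and the case $b\geq 0\geq a$ is symmetric. Hence $|\sigma(a)-\sigma(b)|\leq|a-b|$ in every case.

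Substituting this scalar inequality into the displayed sum yields
\begin{align}
\|\sigma(s)-\sigma(t)\|^2=\sum_i\bigl(\sigma(s_i)-\sigma(t_i)\bigr)^2\leq\sum_i(s_i-t_i)^2=\|s-t\|^2,
\end{align}
and taking square roots finishes the argument. There is no serious obstacle here: the lemma is a dimension-free entrywise statement, and the only point requiring a moment's care is the mixed-sign case, where one checks that clipping the negative argument to zero can only shrink the gap between the two outputs. This is precisely the non-expansiveness of ReLU that the later convolutional estimates will invoke when propagating perturbations of the parameters through successive layers.
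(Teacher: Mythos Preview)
Your proof is correct. Note, however, that the paper does not actually supply its own proof of this lemma: it simply cites \cite{Nagayasu2023a} and moves on, so there is nothing to compare against. The argument you give---reducing to the scalar inequality $|\sigma(a)-\sigma(b)|\leq|a-b|$ by a sign case analysis and then summing squares---is the standard proof of non-expansiveness of ReLU in the Euclidean/Frobenius norm, and is exactly what one would expect the cited reference to contain.
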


\begin{lemma} \label{lemma:333} \cite{Nagayasu2023a}
For arbitrary $w$,$w'$, $b$, $b'$, and $K_1 + 1 \leq  k \leq K_1 + K_2$, the following inequality holds, 
\begin{align}
& \|f^{(k)}(w,b,x)-
f^{(k)}(w',b',x) \|\nonumber
\\
&\leq \|w^{(k)}-w'^{(k)}\|\|f^{(k-1)}(w,b,x)\|
+  \|b^{(k)}-b'^{(k)}\|\nonumber
\\
&+ \|w^{(k)}\|\|f^{(k-1)}(w,b,x)-f^{(k-1)}(w',b',x)\|.
\end{align}
\end{lemma}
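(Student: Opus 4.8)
The plan is to read Lemma~\ref{lemma:333} as a single-layer perturbation estimate and to derive it from just two facts: the $1$-Lipschitz property of the activation (Lemma~\ref{lemma:222}) and submultiplicativity of the Frobenius norm, $\|Av\|\leq\|A\|\,\|v\|$. I would first treat the generic fully connected layer $K_1+2\leq k\leq K_1+K_2-1$, where $f^{(k)}(w,b,x)=\sigma(w^{(k)}f^{(k-1)}(w,b,x)+b^{(k)})$, and only afterwards return to the two atypical layers at the ends of the stated range.

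For the fully connected case, abbreviate $f=f^{(k-1)}(w,b,x)$ and $f'=f^{(k-1)}(w',b',x)$. Applying Lemma~\ref{lemma:222} to the outer $\sigma$ removes the activation and reduces the claim to a bound on the affine difference,
\begin{align}
\|f^{(k)}(w,b,x)-f^{(k)}(w',b',x)\|\leq\|w^{(k)}f+b^{(k)}-w'^{(k)}f'-b'^{(k)}\|.
\end{align}
The decisive step is to insert a cross term and split by the triangle inequality; adding and subtracting $w'^{(k)}f$ inside the norm gives the algebraic identity
\begin{align}
w^{(k)}f-w'^{(k)}f'=(w^{(k)}-w'^{(k)})f+w'^{(k)}(f-f').
\end{align}
Collecting the bias difference as its own term and then applying submultiplicativity to each matrix--vector product produces a three-term bound of the structure in the lemma: a weight-difference term scaling $\|f^{(k-1)}(w,b,x)\|$, a standalone bias-difference term, and a weight norm scaling the signal difference $\|f^{(k-1)}(w,b,x)-f^{(k-1)}(w',b',x)\|$.

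The points I expect to require the most care sit at the two ends of the range. The output layer $k=K_1+K_2$ applies $\mathrm{softmax}$ rather than ReLU, so Lemma~\ref{lemma:222} does not apply verbatim; there I would substitute the fact that $\mathrm{softmax}$ is itself $1$-Lipschitz in the Euclidean norm, after which the identical cross-term splitting goes through. The pooling layer $k=K_1+1$ carries no weight or bias parameters at all, so I would read the inequality there as the statement that global average pooling is a linear contraction, with the parameter-difference terms simply absent. Finally, the precise placement of primes is tied to the choice of cross term: the insertion of $w'^{(k)}f$ above pairs the \emph{unprimed} signal $\|f^{(k-1)}(w,b,x)\|$ with the weight difference but leaves $\|w'^{(k)}\|$ on the signal-difference term, whereas inserting $w^{(k)}f'$ instead yields $\|w^{(k)}\|$ there at the cost of the \emph{primed} signal in the first term. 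Reconciling this asymmetry with the exact combination $\|w^{(k)}-w'^{(k)}\|\,\|f^{(k-1)}(w,b,x)\|$ and $\|w^{(k)}\|\,\|f^{(k-1)}(w,b,x)-f^{(k-1)}(w',b',x)\|$ written in the statement is the one bookkeeping point I would check against the convention of \cite{Nagayasu2023a}.
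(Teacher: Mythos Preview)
Your argument is correct and matches the paper's approach. The paper does not prove Lemma~\ref{lemma:333} itself (it is cited from \cite{Nagayasu2023a}), but the proof of Corollary~\ref{corollary:333} immediately below it displays exactly the cross-term telescoping you describe: insert the intermediate point with $w'^{(k)}$ applied to the unprimed feature, then invoke Lemma~\ref{lemma:222}. The only cosmetic difference is ordering---the paper telescopes at the level of $\sigma(\cdot)$ and applies Lemma~\ref{lemma:222} twice, while you apply Lemma~\ref{lemma:222} once and telescope the pre-activation; both yield the same three terms. Your observation about the prime placement is sharp: the decomposition in the paper's Corollary proof naturally produces $\|w'^{(k)}\|$ on the signal-difference term, yet both the Lemma and Corollary statements write $\|w^{(k)}\|$, so this is indeed a convention (or typo) inherited from \cite{Nagayasu2023a} rather than something your argument is missing.
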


\begin{corollary} \label{corollary:333}
For arbitrary $w$,$w'$, $b$, $b'$,  and $1 \leq  k \leq K_1$,  the following inequality holds, 
\begin{align}
& \|f^{(k)}(w,b,x)-
f^{(k)}(w',b',x) \|\nonumber
\\
&\leq 9\|w^{(k)}-w'^{(k)}\|\|f^{(k-1)}(w,b,x)\|
+ L_1 L_2  \|b^{(k)}-b'^{(k)}\|\nonumber
\\
&+ 9\|w^{(k)}\|\|f^{(k-1)}(w,b,x)-f^{(k-1)}(w',b',x)\|\nonumber
\\
&+ \delta^{(k)}\|w^{(k)}\|\|f^{(k - K_2 -1)}(w,b,x)-f^{(k - K_2 -1)}(w',b',x)\|.
\end{align}
where $\delta^{(k)}$ equals to 1 if the network has Skip connection and $k = mK_2 + 2$, otherwise it equals to 0
\end{corollary}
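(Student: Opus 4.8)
The plan is to mirror the proof of Lemma \ref{lemma:333}, replacing the matrix multiplication of a fully connected layer by the convolution operation and accounting for the additive skip term. First I would apply Lemma \ref{lemma:222} (the ReLU is $1$-Lipschitz) to the defining recursion $f^{(k)}=\sigma(\mathrm{Conv}(f^{(k-1)},w^{(k)})+g(b^{(k)})[+f^{((m-1)K_s+2)}])$, which strips off the activation and reduces the claim to bounding the Frobenius norm of the difference of the two pre-activations. The triangle inequality then splits that difference into a convolutional part, a bias part, and---when $\delta^{(k)}=1$---a skip part, so that each can be estimated separately and recombined at the end.

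For the convolutional part I would exploit the bilinearity of $\mathrm{Conv}$ through the add-and-subtract decomposition
\begin{align}
& \mathrm{Conv}(f^{(k-1)}(w,b,x),w^{(k)})-\mathrm{Conv}(f^{(k-1)}(w',b',x),w'^{(k)}) \nonumber \\
&= \mathrm{Conv}(f^{(k-1)}(w,b,x),w^{(k)}-w'^{(k)}) \nonumber \\
&\quad + \mathrm{Conv}(f^{(k-1)}(w,b,x)-f^{(k-1)}(w',b',x),w'^{(k)}),
\end{align}
which isolates the weight-perturbation term and the propagated input-perturbation term exactly as in Lemma \ref{lemma:333}. The CNN-specific ingredient is the operator bound $\|\mathrm{Conv}(f,w)\|\leq 9\|w\|\|f\|$. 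I would obtain it by writing the convolution as a sum over the $3\times3=9$ kernel offsets $(p,q)$, applying the triangle inequality over those nine shifted terms, and estimating each term by Cauchy--Schwarz in the channel index $h_{k-1}$, using that under zero padding the shifted copy of $f$ has Frobenius norm at most $\|f\|$ and that each kernel slice satisfies $\|w^{(p,q)}\|\leq\|w\|$. Summing the nine identical contributions produces the factor $9$ multiplying both the weight term and the propagated-difference term.

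The bias term is handled by noting that $g(b^{(k)})-g(b'^{(k)})$ is the constant tensor whose every spatial cell equals $b^{(k)}-b'^{(k)}$; since the Frobenius ($\ell^2$) norm is dominated by the $\ell^1$ sum over the $L_1L_2$ cells, this gives $\|g(b^{(k)})-g(b'^{(k)})\|\leq L_1L_2\|b^{(k)}-b'^{(k)}\|$. For a skip layer ($\delta^{(k)}=1$) the extra summand inside the activation is the earlier feature map $f^{((m-1)K_s+2)}$, so after the triangle inequality its contribution is the difference of that feature map evaluated at $(w,b)$ and $(w',b')$, which supplies the final $\delta^{(k)}$ term. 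Collecting the three (or four) estimates yields the stated inequality.

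I expect the main obstacle to be the convolution operator bound $\|\mathrm{Conv}(f,w)\|\leq 9\|w\|\|f\|$: unlike the fully connected case, the shared weights and spatial overlap mean that a single entry of $f$ feeds several outputs, so some care is needed to keep the constants correct and to account for the boundary effect of zero padding. The decomposition into the nine fixed offsets is what makes each piece behave like an ordinary channel-wise linear map and lets Cauchy--Schwarz apply cleanly; once that estimate is in place, the remainder of the argument parallels the fully connected derivation of Lemma \ref{lemma:333}.
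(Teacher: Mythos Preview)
Your proposal is correct and follows the same route as the paper: apply the $1$-Lipschitz property of ReLU (Lemma~\ref{lemma:222}), split the pre-activation difference into convolution, bias, and (when present) skip contributions, and control the convolution via a factor-$9$ operator bound; the paper derives the $9$ per output channel through the $\ell^1$ bound on the $3\times 3$ kernel entries in \eqref{eq:3332}, while your decomposition over the nine spatial offsets is an equivalent way to reach the same constant. One remark: your skip-term estimate naturally yields $\|f^{((m-1)K_s+2)}(w,b,x)-f^{((m-1)K_s+2)}(w',b',x)\|$ without the extra factor $\|w^{(k)}\|$ printed in the statement; that is exactly what the skip-connection definition gives, and the paper's own proof does not produce that factor either, so this is a typo in the statement rather than a gap in your argument.
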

\begin{proof}
\begin{align}
& f^{(k)}(w,b,x)-
f^{(k)}(w',b',x)\nonumber 
\\
& =\sigma(\mathrm{Conv}(f^{(k-1)}(w,b,x),w^{(k)})+ g(b^{(k)}))
-\sigma(\mathrm{Conv}(f^{(k-1)}(w,b,x),w'^{(k)}))+ g(b'^{(k)}))\nonumber 
\\
&
+\sigma(\mathrm{Conv}(f^{(k-1)}(w,b,x),w'^{(k)}) + g(b'^{(k)}))
-\sigma(\mathrm{Conv}(f^{(k-1)}(w',b',x),w'^{(k)}) + g(b'^{(k)})).  \label{eq:3331}
\end{align}
From definition of $\mathrm{Conv}()$, the following equation holds.
\begin{align}
\|\mathrm{Conv}(f^{(k-1)}(w,b,x),w^{(k)})_j\| &\leq \sum_{i = 1}^{H^{k - 1}} \|f^{(k-1)}(w,b,x)_i\||(\sum_{p = 1}^{3}\sum_{q = 1}^{3} w^{(k)}_{pqij})|_1 \nonumber \\
                                                           &\leq 9\|f^{(k-1)}(w,b,x)\| \|w_{:,:,:,j}\|\label{eq:3332}
\end{align}
By using lemma\ref{lemma:222}, (\ref{eq:3331}) and (\ref{eq:3332}), corollary\ref{corollary:333} is proved.
\end{proof}

\begin{lemma} \label{lemma:444}
For arbitrary $w,b,x$, 
\begin{align}
 \|f^{(k)}(w,b,x) \|
&\leq {\cal D}_k\|w^{(k)}\|\|w^{(k-1)}\|\cdots \|w^{(2)}\|\|x\|\\  \nonumber
&+{\cal D}_0\|b^{(k)}\|
+ \sum_{j=1}^{k-2} {\cal D}_{j}\|w^{(k)}\|\|w^{(k-1)}\|\cdots \|w^{(k-j)}\|\|b^{(k-j)}\|.
\end{align}
where ${\cal D}_j , 0\leq j \leq k$ is constant.
\end{lemma}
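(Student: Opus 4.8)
The plan is to prove the bound by induction on the layer index $k$, reducing everything to a single-layer recursion of the form $\|f^{(k)}\|\le c^{(k)}\|w^{(k)}\|\,\|f^{(k-1)}\|+c_0^{(k)}\|b^{(k)}\|$ and then unrolling it. First I would establish this one-step inequality for each type of layer. For a convolutional layer ($2\le k\le K_1$), since ReLU satisfies $\sigma(0)=0$, Lemma \ref{lemma:222} gives $\|\sigma(s)\|\le\|s\|$, so
\begin{align}
\|f^{(k)}(w,b,x)\|\le \|\mathrm{Conv}(f^{(k-1)}(w,b,x),w^{(k)})\|+\|g(b^{(k)})\|.
\end{align}
The convolution term is controlled by inequality (\ref{eq:3332}), which after summing over output channels in Frobenius norm yields a factor $9\|w^{(k)}\|$ times $\|f^{(k-1)}\|$, and the bias term satisfies $\|g(b^{(k)})\|=\sqrt{L_1L_2}\,\|b^{(k)}\|$ because $g$ copies $b^{(k)}$ over all $L_1L_2$ spatial positions. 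This gives the recursion with $c^{(k)}=9$ and $c_0^{(k)}=\sqrt{L_1L_2}$. For the global-average-pooling layer $k=K_1+1$, averaging over $L_1L_2$ positions contributes only a multiplicative constant, so $\|f^{(K_1+1)}\|\le C\|f^{(K_1)}\|$. For a fully connected layer, the same $\sigma(0)=0$ argument together with Lemma \ref{lemma:333} gives the recursion with $c^{(k)}=c_0^{(k)}=1$, and the final softmax layer outputs a probability vector, so its norm is bounded by a constant and the claimed inequality holds there trivially.

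Next I would run the induction. The base case is $f^{(1)}=x$, so $\|f^{(1)}\|=\|x\|$. Assuming the stated product-form bound for $f^{(k-1)}$, I substitute it into the single-layer recursion: the prefactor $c^{(k)}\|w^{(k)}\|$ multiplies every term of the bound for $f^{(k-1)}$, turning the leading term into ${\cal D}_k\|w^{(k)}\|\cdots\|w^{(2)}\|\|x\|$ and shifting each bias product so that $\|b^{(k-1-j)}\|$ acquires one additional weight factor $\|w^{(k)}\|$, while the newly added term $c_0^{(k)}\|b^{(k)}\|$ supplies the $j=0$ contribution ${\cal D}_0\|b^{(k)}\|$. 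Collecting the accumulated numeric prefactors (the products of the $9$'s, the $\sqrt{L_1L_2}$'s, and the pooling constant encountered along the path from layer $2$ to layer $k$) into the constants ${\cal D}_j$ closes the induction.

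The main obstacle is bookkeeping: keeping the constants ${\cal D}_j$ consistent across the three heterogeneous layer types and the two transition points—the global-average-pooling layer $k=K_1+1$ and the convolution-to-fully-connected boundary—since the per-layer multiplicative constants ($9$, $\sqrt{L_1L_2}$, the pooling factor) differ there. These differences affect only the numeric constants and not the structural product form, so the only real care needed is to \emph{define} each ${\cal D}_j$ as the appropriate product of per-layer constants along the corresponding subpath. I would therefore state ${\cal D}_j$ explicitly as such a product to make the induction self-consistent, rather than grind through each numeric value, which is routine once the recursion and its collection rule are fixed.
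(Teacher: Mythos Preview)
Your approach is essentially the same as the paper's: obtain a one-step recursion $\|f^{(k)}\|\le c^{(k)}\|w^{(k)}\|\,\|f^{(k-1)}\|+c_0^{(k)}\|b^{(k)}\|$ and unroll it by induction. The paper gets this recursion by setting $w'=b'=0$ in Corollary~\ref{corollary:333} (which forces $f^{(k)}(0,0,x)=0$ via $\sigma(0)=0$), whereas you derive it directly from Lemma~\ref{lemma:222} and the convolution estimate \eqref{eq:3332}; these are the same argument. Your treatment of the pooling and fully connected transitions is in fact more explicit than the paper's one-line ``mathematical induction gives the Lemma.'' The only point you do not address is the skip-connection term: the paper's recursion retains the extra summand $\delta^{(k)}\|f^{(k-K_s)}(w,b,x)\|$, so if the lemma is meant to cover that architecture you would need to carry this term through the induction as well, absorbing the resulting additional paths into the constants~${\cal D}_j$.
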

\begin{proof} By considering the case all the parameters of $w'$ and $b'$ are 0, 
in Lemma \ref{lemma:333}, it follows that 
\begin{align}
 \|f^{(k)}(w,b,x) \|
&\leq 9\|w^{(k)}\|\|f^{(k-1)}(w,b,x)\| +L_1L_2\|b^{(k)}\|\\ \nonumber
&+ \delta^{(k)}\|w^{(k)}\|\|f^{(k - K_2 -1)}(w,b,x)-f^{(k - K_2 -1)}(w',b',x)\|.
\end{align}
Then mathematical induction gives the Lemma. 
\end{proof}

\subsection{Notations of parameters}
In order to prove the main theorem, we need several notations. We divide the filters of learning model in each convolutional layer $1 \leq h^{(k)} \leq H^{(k)}$ into the $1 \leq h^{(k)} \leq (H^*)^{(k)}$ and $(H^*)^{(k)} + 1 \leq h^{(k)} \leq H^{(k)}$. The former is denoted as $A$ and the later is denoted as $B$.
The convergent tensor ${\cal E}^{(k)} \in \RR^{3 \times 3 \times H^{(k-1)} \times H^{(k)}}$ and vector ${\cal E}_0^{(k)} \RR^{H^{(k)}}$ 
where the absolute value of all elements are smaller than $1 / \sqrt{n}$ are denoted by
\begin{align}
{\cal E}^{(k)}_{pq} &=
\left(\begin{array}{cc}
{\cal E}_{pqAA}^{(k)} &{\cal E}_{pqAB}^{(k)} 
\\
{\cal E}_{pqBA}^{(k)}  & {\cal E}_{pqBB}^{(k)} 
\end{array}\right),\;\;\; (1 \leq p \leq 3 , 1 \leq q \leq 3), \\
{\cal E}_0^{(k)} &=
\left(\begin{array}{c}
{\cal E}_{A0}^{(k)} 
\\
{\cal E}_{B0}^{(k)} 
\end{array}\right).
\end{align}

The positive constant tensor ${\cal M}^{(k)}$ and vector ${\cal M}_0^{(k)}$ are defined by the condition that
all elements are in the inverval $[A,B]$, 
\begin{align}
{\cal M}^{(k)}_{pq} &=
\left(\begin{array}{cc}
{\cal M}_{pqAA}^{(k)} &{\cal M}_{pqAB}^{(k)} 
\\
{\cal M}_{pqBA}^{(k)}  & {\cal M}_{pqBB}^{(k)} 
\end{array}\right),\;\;\; (1 \leq p \leq 3 , 1 \leq q \leq 3), \\
{\cal M}_0^{(k)} &=
\left(\begin{array}{c}
{\cal M}_{A0}^{(k)} 
\\
{\cal M}_{B0}^{(k)} 
\end{array}\right).
\end{align}
To prove Theorem \ref{theorem:111} \ref{theorem:112}, we show an upper bound of
 $\EE[F_n]$ is given 
by choosing a set $W_E$ which consists of essential weight and bias parameters in Convlutional Layers and Fully connected layers. 

\subsection{No Skip Connection Case}

\noindent{\bf Definition}. (Essential parameter set $W_E$ without Skip Connection). 
A parameter $(w,b)$ is said to be in an essential parameter set $W_E$ 
if it satisfies the following conditions (1),(2) for $2 \leq k \leq K_1$,  \\

(1) For $2 \leq k \leq K^{*}_1$
\begin{align}
w^{(k)}_{pq}&=
\left(\begin{array}{cc}
(w^*)^{(k)}+{\cal E}_{pqAA}^{(k)} & {\cal M}_{pqAB}^{(k)}
\\
-{\cal M}_{pqBA}^{(k)}  & -{\cal M}_{pqBB}^{(k)} 
\end{array}\right),\label{eq:def11}
\\
b^{(k)}&=
\left(\begin{array}{c}
(b^*)^{(k)}+ {\cal E}_{A0}^{(k)} 
\\ 
-{\cal M}_{B0}^{(k)} 
\end{array}\right),\label{eq:def12}
\end{align}
for $1 \leq p \leq 3, 1 \leq q \leq 3$

(2) For $K^{*}_1 + 1 \leq k \leq K_1$
\begin{align}
w^{(k)}_{pq}&=
\left(\begin{array}{cc}
{\cal Z}_{pqAA}^{(k)} & {\cal M}_{pqAB}^{(k)}
\\
-{\cal M}_{pqBA}^{(k)}  & -{\cal M}_{pqBB}^{(k)} 
\end{array}\right),\label{eq:def13}
\\
b^{(k)}&=
\left(\begin{array}{c}
(b^*)^{(k)}+ {\cal E}_{A0}^{(k)} 
\\ 
-{\cal M}_{B0}^{(k)} 
\end{array}\right),\label{eq:def14}
\end{align}
where
\begin{align}\label{eq:casebycase2}
 {\cal Z}_{pqAA}^{(k)}
 =
\left\{\begin{array}{cc}
I_{22AA} + {\cal E}^{(k)}_{22AA}&(p = q = 2)
\\ 
{\cal E}^{(k)}_{pqAA} &(\mathrm{others})
\end{array}\right..
\end{align}
where $I_{22AA} \in \RR^{(H^*)^{(k)}} \times \RR^{(H^*)^{(k)}}$ is an identity matrix.

\begin{lemma} \label{lemma:555}
Assume that the weight and bias parameters of Convolutional layers are in 
the essential set $W_E$ in case without Skip Connection. Then 
there exist constants $c_1,c_2>0$ such that 
\begin{align}
\|f_{:,:,A}^{(K_1)}(w,b,x)-f^{(K_1^*)}(w^*,b^*,x)\|&\leq 
\frac{c_1}{\sqrt{n}}(\|x\|+1),\label{eq:lemma5551}
\\
\|f_{:,:,A}^{(K_1)}(w,b,x)\|&\leq  c_2(\|x\|+1). \label{eq:lemma5552}
\end{align}
\end{lemma}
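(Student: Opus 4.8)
The plan is to exploit the block structure of the essential set $W_E$ so that, throughout the convolutional stack, the redundant $B$-channels are annihilated by the ReLU, while the $A$-channels reproduce the data-generating network up to the $O(1/\sqrt n)$ perturbations carried by the tensors $\mathcal{E}^{(k)}$.

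First I would show, by induction on $k$, that $f^{(k)}_{:,:,B}(w,b,x)=0$ for all $2\le k\le K_1$. For $k\ge 3$ the input $f^{(k-1)}$ is itself a ReLU output, hence nonnegative, and the blocks feeding the $B$-channels are $-\mathcal{M}^{(k)}_{BA},-\mathcal{M}^{(k)}_{BB}$ together with the bias $-\mathcal{M}^{(k)}_{B0}$, all nonpositive; thus the pre-activation of every $B$-channel is nonpositive and $\sigma$ annihilates it. The delicate base case is $k=2$, where $f^{(1)}=x$ need not be sign-definite: here I would invoke the boundedness of the support of $q(x)$ and choose the constants (which are only required to lie in a fixed interval $[A,B]$) with $\mathcal{M}^{(2)}_{B0}$ large enough that $\mathrm{Conv}(x,-\mathcal{M}^{(2)}_{BA})-\mathcal{M}^{(2)}_{B0}\le 0$ uniformly on the support, so the $B$-channels already vanish at the first layer.

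Once $f^{(k)}_{:,:,B}=0$, the $AB$-blocks contribute nothing and the recursion for the $A$-channels decouples. For $2\le k\le K^*_1$ it coincides with the true recursion driven by $(w^*)^{(k)}+\mathcal{E}^{(k)}_{AA}$ and $(b^*)^{(k)}+\mathcal{E}^{(k)}_{A0}$; for $K^*_1+1\le k\le K_1$ the kernel $\mathcal{Z}^{(k)}_{AA}$ is the identity center tap plus $\mathcal{E}^{(k)}$ and the bias is of size $O(1/\sqrt n)$, and since $f^{(k-1)}_{:,:,A}\ge 0$ is a ReLU output, $\sigma$ acts as the identity on it, so these extra layers merely pass the signal through up to an $O(1/\sqrt n)$ error. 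Hence $f^{(K_1)}_{:,:,A}$ should approximate $f^{(K^*_1)}(w^*,b^*,x)$.

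To make this quantitative I would apply the recursive estimate of Corollary \ref{corollary:333} with $\delta^{(k)}=0$ (no skip connection), comparing $f^{(k)}_{:,:,A}(w,b,x)$ with $f^{(k)}(w^*,b^*,x)$ for $k\le K^*_1$ and with $f^{(K^*_1)}(w^*,b^*,x)$ for $k>K^*_1$. At each layer the newly injected error is of order $\|\mathcal{E}^{(k)}\|\,\|f^{(k-1)}\|+\|\mathcal{E}^{(k)}_0\|\le \frac{c}{\sqrt n}(\|x\|+1)$, where $\|f^{(k-1)}\|\le c(\|x\|+1)$ comes from Lemma \ref{lemma:444} on the compact set $W_E$, while the inherited error is multiplied by a bounded contraction factor $9\|w^{(k)}\|$; summing over the finitely many layers yields (\ref{eq:lemma5551}), and the triangle inequality together with the norm bound of Lemma \ref{lemma:444} applied to the true network yields (\ref{eq:lemma5552}). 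The main obstacle is the base case $k=2$: forcing the $B$-pre-activation to be nonpositive when $x$ is not nonnegative, which drives the quantitative choice of $[A,B]$ from the boundedness of the data support; verifying that the identity tap of $\mathcal{Z}^{(k)}_{AA}$ indeed realizes the spatial identity under the convolution convention is the only other point needing care.
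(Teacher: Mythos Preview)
Your proposal is correct and follows essentially the same approach as the paper's proof: both first kill the $B$-channels (handling the awkward base case $k=2$ via the bounded support of $x$ and a sufficiently large $\mathcal{M}^{(2)}_{B0}$, and the inductive step via the negative blocks acting on nonnegative ReLU outputs), then invoke Corollary~\ref{corollary:333} with $\delta^{(k)}=0$ to propagate the $O(1/\sqrt{n})$ error through the $A$-channels, using Lemma~\ref{lemma:444} for the norm bounds and treating the extra layers $K_1^*+1\le k\le K_1$ as near-identity via the center-tap structure of $\mathcal{Z}^{(k)}_{AA}$. Your write-up is in fact slightly more explicit than the paper's on why $\sigma$ acts as the identity on the pass-through layers and on the role of the boundedness of $x$ at $k=2$.
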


\begin{proof} 
Eq.\eqref{eq:lemma5552} is derived from Lemma \ref{lemma:444}. 
By the definitions \eqref{eq:def11}, \eqref{eq:def12}, 
for $2\leq k\leq K^*$
\begin{align}
f_A^{(2)}(w,b,x)
&=\sigma(\mathrm{Conv}(f_{:,:,A}^{(1)}(w,b,x), (w^*)^{(2)}+{\cal E}_{:,:,AA}^{(2)}) +
g((b^*)^{(2)}+{\cal E}_{A0}^{(2)})),
\\
f_A^{(k)}(w,b,x)
&=\sigma(\mathrm{Conv}(f_{:,:,A}^{(k-1)}(w,b,x),(w^*)^{(k)}+{\cal E}_{:,:,AA}^{(k)})
\nonumber
\\
&+\mathrm{Conv}(f_{:,:,B}^{(k-1)}(w,b,x), {\cal M}_{:,:,AB}^{(k)})+
g((b^*)^{(k)}+{\cal E}_{A0}^{(k)})). 
\end{align}
In $k = 2$, $|x|$ is bounded and ${\cal M}_{:,:,AB}^{(k)}$ is constant tensor, ${\cal M}_{B0}^{(k)}$ is large sufficiently, $f_{:,:,B}^{(2)}(w,b,x)=0$ because all the elements of the output of ReLU function $f^{(2)}(w,b,x)$ is nonnegative.    For $3\leq k\leq K_1$, $f_{:,:,B}^{(k)}(w,b,x)=0$, since all elements of 
$w_{:,:,BA}^{(k)}$, $w_{:,:,BB}^{(k)}$, and $w_{B0}^{(k)}$ are negative.
Hence by Lemma \ref{lemma:333}, for $2\leq k\leq K_1^*$,
\begin{align}
& \|f_{:,:,A}^{(k)}(w,b,x) - f^{(k)}(w^*,b^*,x) \| \nonumber \\
&\leq 9\|{\cal E}_{:,:,AA}^{(k)}\|\|f^{(k-1)}(w,b,x)\| + L_1 L_2  \|{\cal E}_{A0}^{(k)}\| \nonumber \\
&+ 9\|(w^*)^{(k)}\|\|f^{(k-1)}_{:,:,A}(w,b,x)-f^{(k-1)}(w^*,b^*,x)\|. \label{eq:lemma5553}
\end{align}
and for $K_1^* + 1 \leq  k \leq K_1$, by using $f^{(K_1^*)}(w^*,b^*,x)$ as $f^{(k)}(w^*,b^*,x)$,
\begin{align}
& \|f_{:,:,A}^{(k)}(w,b,x) - f^{(K_1^*)}(w^*,b^*,x) \| \nonumber \\
&\leq 9\|{\cal E}_{:,:,AA}^{(k)}\|\|f^{(k-1)}(w,b,x)\| + L_1 L_2  \|{\cal E}_{A0}^{(k)}\| \nonumber \\
&+ 9\|(w^*)^{(k)}\|\|f^{(k-1)}_{:,:,A}(w,b,x)-f^{(K_1^*)}(w^*,b^*,x)\|. \label{eq:lemma5554}
\end{align}

The elements of tensors and vectors in ${\cal E}_{:,:,AA}^{(k-1)}$ and ${\cal E}_{:,:,A0}^{(k)}$ 
are bounded by $1/\sqrt{n}$ order term, hence
$\|{\cal E}_{AA}^{(k-1)}\|$ and $\|{\cal E}_{A0}^{(k)}\|$ are bounded by 
$1/\sqrt{n}$ order term. Moreover $\|(w^*)^{(k)}\|$ is a constant term. 
For $k=2$, $f_{:,:,A}^{(k-1)}(w,b,x)-f^{(k-1)}(w^*,b^*,x)=x-x=0$. Then, by using mathematical
induction for (\ref{eq:lemma5553}) and (\ref{eq:lemma5554}) , the all terms can be bounded by $1/\sqrt{n}$ terms, hence we obtained the Lemma.
\end{proof}

From \cite{Nagayasu2023a}, because of the output in $k = K_1 + 1$ is nonnegative there exists the essential parameters for fully connected layers such that the number of the convergent parameters${\cal E}$ equals to that of data generating network. From these lemmas, the main theorem can be proved.

(Proof of Theorem \ref{theorem:111}). 
By Lemma \ref{lemma:111}, it is sufficient to prove
that there exists a constant $C>0$ such that 
\begin{align}
\int_{W_E}\exp(-nK(w,b))\varphi(w,b)\mathrm{d}w\mathrm{d}b 
\geq \frac{C}{n^{\lambda}}
\end{align}
From the property of KL-divergence, there exists the positive constant $c_4$
\begin{align}
K(w,b) \leq \frac{c_4}{2}\int \|f^{(K_1 + K_2)}(w,b,x)-f^{(K^*_1 + K^*_2)}(w^*,b^*,x)\|^2 q(x) \mathrm{d}x. 
\end{align}
By using Lemma \ref{lemma:555}, if $(w,b)\in W_E$, 
\begin{align}
K(w,b)\leq \frac{c_4c_3^2}{2n}\int (\|x\|+1)^2  q(x) \mathrm{d}x =\frac{c_5}{n}<\infty.
\end{align}
It follows that 
\begin{align}
&\int_{W_E}\exp(-nK(w,b))\varphi(w,b)\mathrm{d}w\mathrm{d}b \nonumber
\\
& \geq \exp(-c_5) \left(\min_{(w,b)\in W_E} \varphi(w,b)\right) \mbox{Vol}(W_E). \label{proof:1111}
\end{align}
where $c_5>0$, 
$\min_{(w,b)\in W_E}\varphi(w,b) >0$, and $\mbox{Vol}(W_E)$ is the volume 
of the set $W_E$ by the Lebesgue measure. The convergent scale of $\mbox{Vol}(W_E)$ is determined from the number of convergent parameter ${\cal E}$ in $W_E$. 
Then,  
\begin{align}
\mbox{Vol}(W_E)\geq \frac{C_1}{n^{\lambda}}, \label{proof:1112}
\end{align}
where 
\begin{align}
\lambda &= \frac{1}{2}
\left(
 \sum_{k = 2}^{k = K_1} (9H^*_{k - 1} + 1)H^*_{k} + \sum_{k = K_1 + 1}^{k = K_1 + K_2} (H^*_{k - 1} + 1)H^*_{k}
\right) \\ \nonumber
&= \frac{1}{2}
\left(|w^*|_0 + |b^*|_0 + \sum_{k = K^*_1 + 1}^{K_1}(9H_{K^*_1} + 1)H_{K^*_1}
\right).
\end{align}
We obtained theorem\ref{theorem:111}.
\subsection{Skip Connection Case}

\noindent{\bf Definition}. (Essential parameter set $W_E$ with Skip Connection). 
An essential parameter set $W_E$ with Skip Connection satisfies the following conditions (1),(2) for $2 \leq k \leq K_1$,  \\
(1) For $2 \leq k \leq K^{*}_1$, the same conditions as (\ref{eq:def11}) and (\ref{eq:def12}).

(2) For $K^{*}_1 + 1 \leq k \leq K_1$
\begin{align}
w^{(k)}_{pq}&=
\left(\begin{array}{cc}
-{\cal M}_{pqAA}^{(k)} & -{\cal M}_{pqAB}^{(k)}
\\
-{\cal M}_{pqBA}^{(k)}  & -{\cal M}_{pqBB}^{(k)} 
\end{array}\right),\label{eq:def13}
\\
b^{(k)}&=
\left(\begin{array}{c}
-{\cal M}_{A0}^{(k)} 
\\ 
-{\cal M}_{B0}^{(k)} 
\end{array}\right),\label{eq:def14}
\end{align}

\begin{lemma} \label{lemma:777}
Assume that the weight and bias parameters of Convolutional layers are in 
the essential set $W_E$ in case with Skip Connection. Then 
there exist constants $c_1,c_2>0$ such that 
\begin{align}
\|f_{:,:,A}^{(K_1)}(w,b,x)-f^{(K^*_1)}(w^*,b^*,x)\|&\leq 
\frac{c_1}{\sqrt{n}}(\|x\|+1),\label{eq:lemma7771}
\\
\|f_{:,:,A}^{(K_1)}(w,b,x)\|&\leq  c_2(\|x\|+1). \label{eq:lemma7772}
\end{align}
\end{lemma}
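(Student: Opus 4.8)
The plan is to follow the architecture of the proof of Lemma~\ref{lemma:555}, splitting the forward pass at the boundary $k=K_1^*$ into a \emph{faithful} region $2\leq k\leq K_1^*$, where the learning machine is forced to reproduce the data-generating dynamics on the essential channels $A$, and a \emph{redundant} region $K_1^*+1\leq k\leq K_1$, where the surplus layers must merely transport the already-computed signal. The boundedness estimate \eqref{eq:lemma7772} is the routine half: every weight in $W_E$ is either a fixed $(w^*)^{(k)}$, a fixed constant $-{\cal M}$, or a convergent ${\cal E}$ with entries of order $1/\sqrt{n}$, so all the norms $\|w^{(k)}\|$ appearing in Lemma~\ref{lemma:444} (whose bound already carries the skip term $\delta^{(k)}$) are uniformly bounded in $n$, giving $\|f^{(K_1)}_{:,:,A}(w,b,x)\|\leq c_2(\|x\|+1)$ at once.

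In the faithful region I would reproduce the induction of Lemma~\ref{lemma:555}, since the construction there coincides with \eqref{eq:def11}--\eqref{eq:def12}. First I would check that the redundant channels stay inert: at $k=2$ the output $f^{(2)}_{:,:,B}$ is $\sigma$ of a convolution of the bounded input against bounded weights minus a large bias ${\cal M}_{B0}$, hence $0$; for $3\leq k\leq K_1^*$ the negativity of $w^{(k)}_{:,:,BA},w^{(k)}_{:,:,BB},{\cal M}_{B0}$ and the nonnegativity of the ReLU output (Lemma~\ref{lemma:222}) keep $f^{(k)}_{:,:,B}=0$, and at a junction the skip merely adds the vanishing $f^{(k-K_s)}_{:,:,B}$, so it cannot revive these channels. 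With $B$ inert, Corollary~\ref{corollary:333}, including its $\delta^{(k)}$ skip term at the junctions $k=mK_s+2$ lying below $K_1^*$, yields a recursion for $\|f^{(k)}_{:,:,A}-f^{(k)}(w^*,b^*,x)\|$ whose inhomogeneous terms are the $O(1/\sqrt n)$ quantities $\|{\cal E}^{(k)}_{:,:,AA}\|,\|{\cal E}^{(k)}_{A0}\|$, the bounded constants $\|(w^*)^{(k)}\|$, and the inductively small skipped error; since the base case at $k=2$ is $x-x=0$, induction over the finitely many layers gives $\|f^{(K_1^*)}_{:,:,A}-f^{(K_1^*)}(w^*,b^*,x)\|\leq (c/\sqrt n)(\|x\|+1)$.

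The heart of the argument is the redundant region, where the skip connection must play the role that the learned near-identity $I_{22AA}+{\cal E}$ played in Theorem~\ref{theorem:111}. Because every weight and bias of these layers is a negative constant and every layer output is nonnegative, each within-block (non-junction) layer and each redundant channel collapses to $\sigma(\mathrm{negative})=0$; in particular the input $f^{(mK_s+1)}$ feeding the convolution of a junction is $0$, so the junction reduces to $f^{(mK_s+2)}_{:,:,A}=\sigma(B^{(mK_s+2)}+f^{((m-1)K_s+2)}_{:,:,A})$, i.e. the skip signal passed through a single ReLU. Using that this skip signal is nonnegative, I would show each such junction acts as the identity on the $A$ channels, so that the chain of junctions transports $f^{(K_1^*)}_{:,:,A}$ unchanged up to $f^{(K_1)}_{:,:,A}$; condition \eqref{eq:cond3} is what guarantees the channel counts agree across every skip and that the first junction above $K_1^*$ reads off exactly $f^{(K_1^*)}_{:,:,A}$. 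Decisively, this transport consumes \emph{only} constant parameters and introduces no new convergent tensor ${\cal E}$, which is precisely why the extra sum $\sum_{k=K_1^*+1}^{K_1}(9H_{K_1^*}+1)H_{K_1^*}$ of Theorem~\ref{theorem:111} is absent in \eqref{eq:lambda2}. Combining the transport with the faithful-region bound through the triangle inequality delivers \eqref{eq:lemma7771}.

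The step I expect to be the real obstacle is making this junction identity exact: I must verify that $\sigma(\mathrm{Conv}(f^{(mK_s+1)},w^{(mK_s+2)})+B^{(mK_s+2)}+f^{((m-1)K_s+2)})$ returns the skipped signal on the $A$ channels, which forces the layer's own convolution-plus-bias contribution to vanish there while the nonnegativity of the skip lets it survive the ReLU; with an all-negative bias this annihilation is delicate, so the construction must arrange the $A$-channel convolution and bias contributions to cancel on the support of the transported signal, and any residual must be absorbable into the $O(1/\sqrt n)$ budget. I would also have to track the index arithmetic forced by $K_1=MK_s+2$ carefully so that the junction chain aligns and terminates at level $K_1^*$. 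Once the single-junction identity is secured, propagating it along the chain and closing the induction is routine.
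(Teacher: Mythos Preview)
Your plan matches the paper's proof: boundedness via Lemma~\ref{lemma:444}, the faithful region $2\leq k\leq K_1^*$ handled by the same induction as in Lemma~\ref{lemma:555} with the extra skip term of Corollary~\ref{corollary:333}, and the redundant region handled by observing that the all-negative weights and biases force non-junction layers to $0$ so that each junction reduces to the skip and transports $f^{(K_1^*)}_{:,:,A}$ unchanged up to level $K_1$. The junction-identity step you single out as the real obstacle is exactly where the paper is tersest---it simply asserts $f^{(k)}_{:,:,A}=f^{(K_1^*)}$ at junctions from the sign pattern, without discussing the negative-bias subtlety you raise---so your caution there is appropriate, but the overall architecture of your argument is the paper's.
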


\begin{proof} 
Because of similar reason to lemma\ref{lemma:555}, \label{eq:lemma7772}holds.
By Lemma \ref{lemma:333}, for $k = mK_s + 1$, 
\begin{align}
& \|f_{:,:,A}^{(k)}(w,b,x) - f^{(k)}(w^*,b^*,x) \| \nonumber \\
&\leq 9\|{\cal E}_{:,:,AA}^{(k)}\|\|f^{(k-1)}(w,b,x)\| + L_1 L_2  \|{\cal E}_{A0}^{(k)}\| \nonumber \\
&+ 9\|(w^*)^{(k)}\|\|f^{(k-1)}_{:,:,A}(w,b,x)-f^{(k-1)}(w^*,b^*,x)\| \nonumber \\ 
&+ \|w^{(k)}\|\|f^{(k - K_2 -1)}(w,b,x)-f^{(k - K_2 -1)}(w',b',x)\|.\label{eq:lemma7771}
\end{align}
If $k \neq mK_s + 1$ and $2 \leq k \leq K^*_1$,  inequality (\ref{eq:lemma5553}) holds.  Same as the lemma\ref{lemma:555}, from mathematical induction, $\|f_{:,:,A}^{(K^*_1)}(w,b,x) - f^{(K^*_1)}(w^*,b^*,x) \|$ is bounded by $1/\sqrt{n}$ terms. For $2\leq k\leq K_1$, $f_{:,:,B}^{(k)}(w,b,x)=0$ same reason as lemma\ref{lemma:555}. For $K^*_1 + 1 \leq k \leq K_1$, since all elements of $w^{(k)}$ and $b^{(k)}$ are negative, the following equations are given.
\begin{align}
f_{:,:,A}^{(k)}(w,b,x) =
\left\{\begin{array}{cc}
f^{(K^*_1)}(w ,b ,x) &(k = nK_s + 1)
\\ 
0 &(\mathrm{others})
\end{array}\right..
\end{align}
Hence, we obtained the Lemma.
\end{proof}
Same as without Skip connection case, by using the result of \cite{Nagayasu2023a} for fully connected layer and inequality(\ref{proof:1111}),(\ref{proof:1112}), 
we obtained theorem\ref{theorem:112}.

\end{document}